\newtheorem{thm}{Theorem}
\DeclarePairedDelimiterX{\inp}[2]{\langle}{\rangle}{#1, #2}
\DeclarePairedDelimiterX{\norm}[1]{\lVert}{\rVert}{#1}
\DeclarePairedDelimiterX{\cbr}[1]{\{}{\}}{#1} 
\DeclarePairedDelimiterX{\rbr}[1]{(}{)}{#1} 
\DeclarePairedDelimiterX{\sbr}[1]{[}{]}{#1} 
\providecommand{\R}{\mathbb{R}} 
\DeclareMathOperator{\expect}{\mathbb{E}}
\DeclareMathOperator{\E}{\expect}
\DeclareMathOperator{\sgn}{sign}
\def\sign{\@ifnextchar*{\@sgnargscaled}{\@ifnextchar[{\sgnargscaleas}{\@ifnextchar{\bgroup}{\@sgnarg}{\sgn} }}}
\def\@sgnarg#1{\sgn\rbr{#1}}
\def\@sgnargscaled#1{\sgn\rbr*{#1}}
\def\@sgnargscaleas[#1]#2{\sgn\rbr[#1]{#2}}
\DeclareMathOperator*{\argmax}{arg\,max}
\providecommand{\0}{\bm{0}}
\renewcommand{\gg}{\bm{g}}
\providecommand{\xx}{\bm{x}}
\providecommand{\yy}{\bm{y}}
\providecommand{\zz}{\bm{z}}
\newcommand{\bmu}{\boldsymbol{\mu}}
\newcommand{\muv}{\bmu}
\providecommand{\mW}{\bm{W}}
\providecommand{\mX}{\bm{X}}
\providecommand{\cC}{\mathcal{C}}
\providecommand{\cD}{\mathcal{D}}
\providecommand{\cO}{\mathcal{O}}
\providecommand{\cT}{\mathcal{T}}
\newtheorem{theorem}{Theorem}
\newtheorem{corollary}[theorem]{Corollary}
\newtheorem{lemma}{Lemma}
\newtheorem{remark}[lemma]{Remark}
\newtheorem{example}[lemma]{Example}
\newtheoremstyle{shortassumptionstyle}%
{}
{}
{\itshape}
{0.5em}
{\bfseries}
{.}
{0.5em}
{(\thmname{#1}\thmnumber{#2}) \thmnote{#3}}
\theoremstyle{shortassumptionstyle}
\newtheorem{assumption}{A}
\crefname{assumption}{}{}
\Crefname{assumption}{}{}
\theoremstyle{plain}
\newcommand{\ignore}[1]{}
\definecolor{color1}{RGB}{228,26,28}
\definecolor{color2}{RGB}{55,126,184}
\definecolor{color3}{RGB}{77,175,74}
\definecolor{color4}{RGB}{152,78,163}
\definecolor{color5}{RGB}{255,127,0}
\newcommand{\myitem}[1]{%
    \item[\textbf{(#1)}]\protected@edef\@currentlabel{#1}%
}
\colorlet{worker}{red!40}
\newcommand{\speedup}[1]{{\color{gray}(\ifdim #1 pt > 0.3pt #1\else $< #1$\fi{}$\times$)}}
\renewcommand{\epsilon}{\varepsilon}
\newcommand\rsmraise[1]{%
  \ifx#1\displaystyle .8\else
    \ifx#1\textstyle .8\else
      \ifx#1\scriptstyle .6\else
        .45%
      \fi
    \fi
  \fi}
\definecolor{steelblue}{rgb}{0.27, 0.51, 0.71}
\definecolor{firebrick}{rgb}{0.7, 0.13, 0.13}
\DeclareMathAlphabet{\mathpzc}{OT1}{pzc}{m}{it}
\newcommand{\ouralgo}{\textsc{CoBo}\xspace}
\title{
	\ouralgo: Collaborative Learning via Bilevel Optimization 
}
\author{
  Diba Hashemi \smallskip \\
  EPFL \\
  \texttt{diba.hashemi@epfl.ch} \\
  \And
  Lie He \smallskip \\
  EPFL \\
  \texttt{lie.he@epfl.ch} \\
  \And
  Martin Jaggi \\
  EPFL \\
  \texttt{martin.jaggi@epfl.ch} \\
}
\begin{document}

\maketitle

\begin{abstract}

Collaborative learning is an important tool to train multiple clients more effectively by enabling communication among clients. Identifying helpful clients, however, presents challenging and often introduces significant overhead. In this paper, we model \textit{client-selection} and \textit{model-training} as two interconnected optimization problems, proposing a novel bilevel optimization problem for collaborative learning.
We introduce \ouralgo, a \textit{scalable} and \textit{elastic}, SGD-type alternating optimization algorithm  that efficiently addresses these problem with theoretical convergence guarantees. Empirically, \ouralgo achieves superior performance, surpassing popular personalization algorithms by 9.3\% in accuracy on a task with high heterogeneity, involving datasets distributed among 80 clients.\footnote{The code is available at: https://github.com/epfml/CoBo}
\end{abstract}

\section{Introduction}
In a classic collaborative learning scenario, $n$ clients, each with a distinct machine learning task, seek solutions that potentially outperform their individual solvers through a collective effort.
Common collaborative learning frameworks generally alternate between training local models on individual datasets and synchronizing updates between collaborators.
More concretely, during the computation step, client $i \in [n]$ trains a $d$-dimensional model $\xx_i \in \R^d$ to minimize its loss function, $f_i: \R^d \rightarrow \R$. In the subsequent communication step, client $i$ exchanges updates with collaborators, potentially benefiting from collaboration.

While there is a plethora of collaborative learning frameworks, the ideal way to collaborate remains under-exploited. The \textsc{FedAvg}~\citep{mcmahan2017communication,kairouz2021advances} algorithm learns one global model over pooled datasets from all clients, i.e., $\min_{\xx \in \mathbb{R}^d} \frac{1}{n}\sum_{i=1}^{n} f_i(\xx)$. However, due to heterogeneous data distributions between clients, a global model may significantly under-perform compared to personal models trained on local datasets for certain clients, which can discourage their participation in collaborative training \citep{mohri2019agnostic}.
 \textsc{Ditto} trains personal models with a regularization term that penalizes their deviation from a global model \citep{smith2021ditto}. Although \textsc{Ditto} enables personal models to leverage the global model, it offers only a coarse-grained level of collaboration. 
In instances where clients' data exhibit significant differences, the \textsc{Ditto} algorithm is constrained to facilitating collaboration at a global level, thereby neglecting the inherent client heterogeneity structure.

Clustering-based federated learning algorithms have been developed to accommodate scenarios in which clients' data originate from multiple clusters \citep{ghosh2020efficient,werner2023provably}. Nevertheless, these algorithms typically inherit the limitations associated with clustering techniques, including the need to predetermine the number of clusters, initialize cluster centers, and other such prerequisites, which can diminish their practical utility.

In this paper, we propose to use a bilevel optimization framework to enhance collaborative learning, by discovering better structural relationships among clients. 
The inner problem focuses on optimizing a binary collaborator selection variable $w_{ij}\in\{0,1\}$, which is determined based on a gradient alignment measure for each pair of clients. In the outer problem, we train personalized models $\mX\in\R^{n\times d}$, while incorporating a penalization term that accounts for the distances between clients, as dictated by the collaboration weights established in the inner problem. 

The contributions of the paper can be summarized as follows.
\begin{itemize}[left=0.3cm]
    \item We model collaborative learning through a novel bilevel optimization formulation that yields more generalizable solutions by fully exploiting the inherent structure of collaboration.
    \item We propose \ouralgo, an SGD-type alternating optimization algorithm that efficiently solves the bilevel problem. \ouralgo scales with the number of clients $n$ and is elastic to the number of clients.
    \item \ouralgo is proved to enjoy theoretical convergence guarantees for collaborative learning with cluster structures.
    \item Empirically, \ouralgo surpasses popular personalized federated learning baselines in experiments involving highly heterogeneous federated learning settings and Large Language Models (LLMs).
\end{itemize}

\section{Problem formulation}
\label{sec:prob-formul}

In this paper, we model collaborative learning as a bilevel optimization problem, where personalized models $\mX \in \R^{d \times n}$ are trained in the outer problem, and collaborative weights $\mW \in \R^{n \times n}$ are given by the inner problem. More concretely,
\begin{align}
    \label{eq:outer} \tag{Model-Training}
    &\min_{[\xx_1, \dots, \xx_n]\in\R^{d\times n}}  \sum\limits_{i = 1}^n f_i(\xx_i)  + \frac{\rho }{2}\sum\limits_{1 \leq i < j \leq n} w_{ij}^\star \norm*{\xx_i - \xx_j}^2_2  \\
  \text{where } w_{ij}^\star \in& \argmax_{w_{ij} \in [0, 1]}~w_{ij} \left\langle \nabla f_i\left(\frac {\xx_i + \xx_j}{2}\right), \nabla f_j\left(\frac{\xx_i + \xx_j}{2}\right) \right\rangle \quad \forall ~i,j\in[n] \label{eq:inner}\tag{Client-Selection}
\end{align}
where $\rho > 0$ is a hyper-parameter for penalization. We break down the formulation as follows.

\paragraph{Outer problem: training personalized models.}
In the outer problem~\eqref{eq:outer}, client $i$ trains its model $\xx_i$ by minimizing its loss function $f_i$, along with its distances to the neighbor models, e.g. $\xx_j$, as defined by the weight $w_{ij}^\star > 0$. 

Our formulation is similar to \textsc{Ditto}~\cite{smith2021ditto}, but with two key differences: \textsc{Ditto} uses uniform and fixed collaboration weight and penalizes the distance between~$\xx_i$ and a global model, whereas we penalize the distance between pairs of clients and adjust the collaboration weight during training. Consequently, when client tasks are heterogeneous, such as clients are drawn from clusters, the performance of a global model deteriorates and \textsc{Ditto}'s local model cannot benefit from fine-grained collaboration. Our method, on the other hand, is able to exploit such structure and achieve better performance in diverse settings.

\begin{figure}[t]
    \centering
    \includegraphics[width=0.9\textwidth]{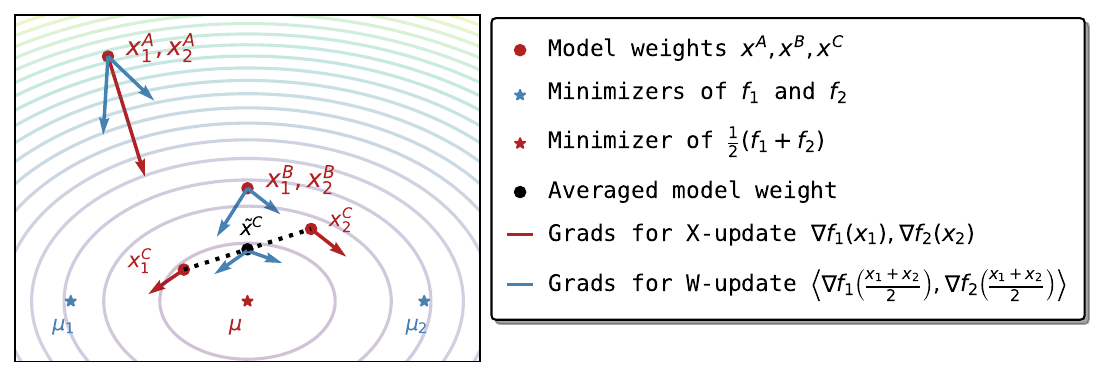}
    \caption{
        Diagram of the inner problem \eqref{eq:inner} represented through a contour of $\frac{1}{2}(f_1+f_2)$. The blue arrows {\color{steelblue}$\rightarrow$} are gradients computed at middle point $\frac{1}{2}(\xx_1+\xx_2)$ to determine connectivity. The red arrows {\color{firebrick}$\rightarrow$} represent gradients computed at local models to update model weights.
    }
    \label{fig:diagram}
\end{figure}

\paragraph{Inner Problem: Finding Collaborators}
In the inner problem, we decompose the task of optimizing~$\mW \in \R^{n \times n}$ into independent sub-problems, one for each entry of $\mW$. The binary collaborator selection $w_{ij} \in \{0,1\}$ is relaxed to a continuous weight $w_{ij} \in [0,1]$. As the objective function is linear with respect to $w_{ij}$, and the domain is convex, solvers such as Frank-Wolfe \citep{frank1956algorithm,jaggi2013revisiting} or projected gradient descent can efficiently find the maximizers at $0$ or $1$. 

It is important to note that $w_{ij}^\star$ does not imply a permanent connection between clients $i$ and $j$, but rather a temporary assessment based on the current states of $\xx_i$ and $\xx_j$.

A simple inner problem with two clients is illustrated in \Cref{fig:diagram}. The $f_1$, $f_2$ are their loss functions. $\muv_1$, $\muv_2$, and $\muv$ are the minimizers of $f_1$, $f_2$, and $\frac{1}{2} (f_1 + f_2)$. Suppose  $\muv_1$, $\muv_2$, and $\muv$ are minimizers of $f_1$, $f_2$, and $\frac{1}{2}(f_1+f_2)$ respectively. The model weights at $A,B,C$ demonstrates three scenarios to update $\mW$.
\begin{itemize}[itemsep=5pt, left=0.2cm]
    \item \textbf{Point A:} model $\xx^A$ is far away from $\muv$, i.e., $\norm{\xx^A-\muv} >> \max_i \norm{\muv_i-\muv}$. Descent directions of clients have positive inner product and therefore $w_{12}=1$. Collaboration at this stage speeds up training.

    \item \textbf{Point B:} model $\xx^B$ is closer to $\muv$, i.e., $\norm{\xx^B-\muv} \sim \norm{\muv_i-\muv}$. In this case, moving closer to the minimizer $\muv$ of $\frac{1}{2}(f_1+f_2)$ no longer help both clients to get closer to minimizers of their own losses $\muv_i$. The inner problem yields $w_{12}=0$ and disconnect clients.
    
    \item \textbf{Point C:} models $\xx_1^C$ and $\xx_2^C$ are already disconnected. The gradients computed at their midpoint suggest they should remain disconnected, i.e., $w_{12}=0$.
\end{itemize}

 Collaboration weights in \ref{eq:inner} are determined in a pair-wise fashion. In contrast to clustering-based methods~\citep{ghosh2020efficient,werner2023provably}, this formulation does not require knowledge of cluster sizes, allowing clients to join and leave during collaborative training. Our formulation enabling \textit{elasticity} and applies to more scenarios.

\begin{remark}[Extensions]
    While \ref{eq:inner} is defined over a box constraint $\mW \in [0,1]^{n \times n}$, it can be easily extended to other convex domains. For example, in all-for-one type collaborative training, the domain weight is optimized over a simplex,. The experiment on language models is deferred to \Cref{subsec:lang-exp}.
\end{remark}

\subsection{Algorithm}

We propose a novel SGD-type alternating-optimization algorithm, termed \ouralgo, to solve the bilevel optimization problem defined by \eqref{eq:outer} and \eqref{eq:inner}. The algorithm alternates between updating the model variables $\mX$ and the collaboration weights $\mW$.

In each round $t$, the model variables $\{\xx_i^t\}_{i=1}^n$ are first fixed, and the collaboration weights $\mW$ are updated by applying projected gradient descent with step size $\gamma > 0$ to \eqref{eq:inner}:
\begin{equation} \label{eq:w}
    w^{t+1}_{ij} =\text{Proj}_{[0,1]} \left(w^t_{ij} + \gamma \left\langle \nabla f_i \left(\frac{\xx_i^t+\xx_j^t}{2}\right), \nabla f_j \left(\frac{\xx_i^t+\xx_j^t}{2}\right) \right\rangle \right) \qquad \forall i,j \in [n].
\end{equation}
Next, the updated collaboration weights $\{w_{ij}^{t+1}\}$ are fixed, and the model variables $\{\xx_i\}_{i=1}^n$ are optimized using the following update rule for \eqref{eq:outer}:
\begin{align} \label{eq:x}
    \xx_i^{t+1} = \xx_i^t - \eta \left(\nabla f_i(\xx_i^t) + \rho \sum_{k=1}^n w_{ik}^{t+1} \left(\xx_i^t - \xx_k^t\right) \right) \qquad \forall i \in [n],
\end{align}
where $\eta > 0$ is the step size. This alternating process is repeated until convergence.

The detailed implementation is described in Algorithm \ref{alg:update-params}. In this implementation, the full gradients $\{\nabla f_i\}_{i \in [n]}$ in \eqref{eq:w} and \eqref{eq:x} are replaced by their stochastic estimates. Additionally, collaborative weights are updated with a probability of $\cO(\frac{1}{n})$, resulting in an expected computation of $\cO(n)$ gradients. This incurs an overhead similar to standard decentralized learning \citep{lian2017decentralized,koloskova2021unified}, effectively enabling client selection with minimal additional cost.

Compared to federated clustering algorithms, which require global synchronization before applying clustering oracles, \eqref{eq:inner} in \ouralgo is carried out in pairs. Such pair-wise operation makes the algorithm non-blocking and robust to stragglers, providing greater flexibility and efficiency.

\begin{algorithm}[t]
\caption{\ouralgo: \textbf{Co}llaborative Learning via \textbf{B}ilevel \textbf{O}ptimization }\label{alg:update-params}
\textbf{Input: }
Model parameters $\forall~i\in[n]~\xx_i^0=\xx^0\in\R^d$; 
Penalization parameter $\rho>0$; $\mW^0 \in \R^{n\times n}$ where $w_{ij}^0=1, \forall~i,j\in[n] $;
Step size $\eta,\gamma>0$. 
\begin{algorithmic}[1]
\For{round $t=0,1\ldots, T$}
    \State Call $\mW^{t+1} \leftarrow \texttt{Client-Selection}(\{\xx_i^{t}\}_{i\in[n]}, \mW^t)$
    \For{client $i=1,\ldots n$}
        \State Draw sample $\xi_i\sim\cD_i$ and compute stochastic gradient $\gg_i^t\in\R^d$ of $f_i(\xx_i^t)$ and update
        \begin{align} \label{eq:x_}
            \xx_i^{t+1} \leftarrow \xx_i^t - \eta \left( \gg_i^t + \rho \sum_{k=1}^n w_{ik}^{t+1} \left(\xx_i^t - \xx_k^t\right) \right)
        \end{align}
    \EndFor
\EndFor
\State{\bfseries Output:} Uniform randomly select $s\in[T]$ and return $\{\xx_0^s, \dots, \xx_n^s\}$ and $\mW^{s}$. \\

\Procedure{Client-Selection}{$\mX$, $\mW$} 
    \For{each pair of clients $(i, j)$ where $i\neq j \in [n]$}
        \If{with a probability $1/n$, }
            \State Compute the average model $\zz_{ij} = \frac{1}{2}(\xx_i + \xx_j)$.
            \State Compute stochastic gradient $\gg_{i\leftarrow i}$ and $\gg_{i\leftarrow j}$ for $f_i(\zz_{ij})$ and $f_j(\zz_{ij})$ respectively,
            \begin{equation} \label{eq:w_}
                w_{ij} \leftarrow \text{Proj}_{[0,1]} \left(w_{ij} + \gamma \left\langle \gg_{i\leftarrow i}, \gg_{i\leftarrow j}\right\rangle \right).
            \end{equation}    
        \EndIf
    \EndFor
\State{\bfseries return} updated selection variables $\mW$
\EndProcedure

\end{algorithmic}
\end{algorithm}

\section{Theoretical results}\label{sec:theoretical-results}
In this section, we define assumptions in collaborative learning settings and show that \ouralgo converges to stationary points. The following assumptions regarding the local optimization objective~$f_i$ are commonly adopted in the literature~\citep{arjevani2023lower,koloskova2021unified}:
\begin{assumption}[$L$-smooth]\label{a:smoothness}
For all $\xx$ and $\yy$ in $\mathbb R^d$ and $i\in[n]$, the loss function $f_i$ has $L$-Lipschitz gradients, i.e. 
\[
  \norm*{\nabla f_i(\xx) - \nabla f_i(\yy)} \leq L \norm{\xx - \yy} \, .
\]
\end{assumption}

\begin{assumption}[Noise bound]\label{a:noise-bound}
For all $\xx\in\mathbb R^d$ and $i\in[n]$, there exists $\sigma^2 > 0$ such that the stochastic gradient has bounded noise
\[
  \E_\xi \left[\norm*{\nabla f_i(\xx; \xi) - \E_\xi \left[ \nabla f_i(\xx; \xi) \right]}^2 \right] \leq \sigma^2  \, .
\]
\end{assumption}

\begin{assumption}[Global minimum]\label{a:global_minimum}
For all $i\in[n]$, the loss function $f_i$ has a global lower bound $f_i^\ast$. 
\end{assumption}

The next assumption characterizes the possible relationships between clients. In the first case, when reaching the stationary point $\xx$ of their joint objective $f_i + f_j$, then by \eqref{eq:in_cluster} implies that $\nabla f_i(\xx) = \nabla f_j(\xx) = \0$ client $i$ and $j$ reach their own stationary points. In the second case, when client $i$ reaches its stationary point, the gradient of $j$ is lower bounded by a positive constant, meaning they don't share stationary points. This leads to eventual 
\begin{assumption}[Collaborativeness]\label{a:collaborative}
    If clients $i$ and $j$ are collaborative, then there exists $M_{ij}>0$ such that
    \begin{equation}\label{eq:in_cluster}
        \norm*{\nabla f_i(\xx) - \nabla f_j(\xx)}_2^2 \le M_{ij}^2\norm*{\nabla f_i(\xx) + \nabla f_j(\xx)}_2^2 \qquad \forall~\xx\in\R^d.
    \end{equation}
    Otherwise, there exists $\zeta^2_{ij}>0$ such that
    \begin{equation}\label{eq:outside_cluster}
        \norm*{\nabla f_i(\xx)}_2^2 + \norm{\nabla f_j(\xx)}_2^2 \ge\zeta^2_{ij} \qquad \forall~\xx\in\R^d.
    \end{equation}
\end{assumption}

This assumption is similar to \citep[Assumptions 4,5]{werner2023provably}, but we define relations for pairs of clients instead of clusters.
In the next example, we use quadratics to demonstrate \Cref{a:collaborative} 
\begin{example}
    Assume that there are $K$ clusters with $[n]=\cup_{k\in[K]}~\cC_k$ and $\cC_k\cap\cC_{k'}=\emptyset$ for all $k\neq k'\in[K]$. Consider the $k$-th cluster with center $\muv_k$ and client $i\in\cC_k$, the loss function is $f_i(\xx) = \frac{a_i}{2} \norm*{\xx-\muv_k}_2^2$ where $a_i>0$. Then for clients $i,j$ in the same cluster, i.e. $i,j\in\cC_k$
\begin{align*}
    \norm*{\nabla f_i(\xx) - \nabla f_j(\xx)}_2^2=(a_i-a_j)^2 \norm*{\xx - \muv_k}_2^2
    =\frac{(a_i-a_j)^2}{(a_i+a_j)^2}\norm*{\nabla f_i(\xx) + \nabla f_j(\xx)}_2^2.
\end{align*}
The $M_{ij}=\frac{|a_i-a_j|}{a_i+a_j}$ in this case. On the other hand, for $i\in\cC_k$ and $j\in\cC_{k'}$ and $\muv_k\neq \muv_{k'}$,
\begin{align*}
    \norm*{\nabla f_i(\xx)}_2^2 + \norm{\nabla f_j(\xx)}_2^2=a_i^2\norm*{\xx - \muv_k}_2^2 + a_j^2\norm*{\xx - \muv_{k'}}_2^2
    = \frac{a_i^2a_j^2}{(a_i^2 + a_j^2)^2} \norm*{\muv_k - \muv_{k'}}_2^2
\end{align*}
where the lower bound $\zeta_{ij}^2=\frac{a_i^2a_j^2}{(a_i^2 + a_j^2)^2} \norm*{\muv_k - \muv_{k'}}_2^2>0$.
\end{example}

Finally, we derive a convergence theorem with the assumption that clients are drawn from clusters, as e.g. in \citep[Assumption 2]{sattler2019}. 
\begin{assumption}[Cluster]\label{a:cluster} 
    All clients are drawn from clusters where within each cluster clients share stationary points.
\end{assumption}

\begin{theorem}\label{theorem:main}
    Suppose Assumption \ref{a:smoothness},\ref{a:noise-bound},\ref{a:global_minimum},\ref{a:collaborative},\ref{a:cluster} hold true.
    Suppose that \ouralgo solves \eqref{eq:w_} with mini-batch size $b$.
    Consider clients $i$ and $j$ in the same cluster $\cC$ of size $c$. Suppose  that $M_{ij}^2\in(0,\frac{1}{5})$, $b\ge \frac{2}{c^2}2L\eta(c-2)\sigma^2$ and $\zeta^2_{ik}\ge \norm{ \nabla f_i(\xx) + \nabla f_k(\xx) }_2^2$ for all $\xx$ and $k$.
    Let $\rho\ge \frac{\sqrt{3}L}{c}$ and step size  
    \begin{align*}
        \eta\le \min\left\{\frac{2}{\sigma\sqrt{LT}} \sqrt{ \frac{1}{c^2} \sum_{i,j\in\cC}  \left( \tilde{f}_{ij}\left(\zz_{ij}^{0}\right) - \tilde{f}_{ij}^\star  \right)}, \frac{1}{2\sqrt{3}L} \right\}.
    \end{align*}
    The consensus distance also converges to 0, i.e.
    \begin{align*}
        \frac{1}{c^2T}\sum_{t=0}^{T-1}\sum_{i,j\in\cC}  \E\left[\norm*{\xx_i^{t+1} - \xx_j^{t+1}}_2^2 \right]
        \le& \frac{6M^2_{ij}}{\rho^2 c^2} 
         \sqrt{ \frac{L\sigma^2}{c^2T} \sum_{i,j\in\cC} \left( \tilde{f}_{ij}\left(\zz_{ij}^{0}\right) - \tilde{f}_{ij}^\star  \right)}.
    \end{align*}
    Moreover, the gradient norm is upper bounded.
    \begin{align*}
       \frac{1}{c^2 T}\sum_{t=0}^{T-1} \sum_{i,j\in\cC} \E\left[\norm*{\nabla \tilde{f}_{ij}\left(\zz_{ij}^{t}\right)  }_2^2\right]
        \le& 3 \sqrt{ \frac{L\sigma^2}{c^2T} \sum_{i,j\in\cC} \left( \tilde{f}_{ij}\left(\zz_{ij}^{0}\right) - \tilde{f}_{ij}^\star  \right)}.
    \end{align*}
\end{theorem}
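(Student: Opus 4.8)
Write $\tilde f_{ij}:=\tfrac12(f_i+f_j)$ for the joint objective of the pair, which is $L$-smooth by \Cref{a:smoothness} and bounded below by $\tilde f_{ij}^\star:=\tfrac12(f_i^\star+f_j^\star)$ by \Cref{a:global_minimum}, and $\zz_{ij}^t:=\tfrac12(\xx_i^t+\xx_j^t)$, so that $\nabla\tilde f_{ij}(\zz_{ij}^t)=\tfrac12\big(\nabla f_i(\zz_{ij}^t)+\nabla f_j(\zz_{ij}^t)\big)$ and $\|\xx_i^t-\zz_{ij}^t\|=\|\zz_{ij}^t-\xx_j^t\|=\tfrac12\|\xx_i^t-\xx_j^t\|$. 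The first step is to pin down the weight iterates \eqref{eq:w_}. By the cluster structure (\Cref{a:cluster}) every within-cluster pair is collaborative, and $\langle\nabla f_i(\zz),\nabla f_j(\zz)\rangle=\tfrac14\big(\|\nabla f_i(\zz)+\nabla f_j(\zz)\|^2-\|\nabla f_i(\zz)-\nabla f_j(\zz)\|^2\big)\ge\tfrac{1-M_{ij}^2}{4}\|\nabla f_i(\zz)+\nabla f_j(\zz)\|^2\ge0$ by \Cref{a:collaborative}; since the mini-batch inner product is unbiased and $\mW^0=\1$, the projected ascent step keeps $w_{ij}^t=1$ for every within-cluster pair, while the hypothesis $\zeta_{ik}^2\ge\|\nabla f_i(\xx)+\nabla f_k(\xx)\|^2$ forces $\langle\nabla f_i,\nabla f_k\rangle\le0$ and sends the cross-cluster weights to $0$. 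Substituting $w_{ik}^{t+1}=1$ iff $k\in\cC$ into \eqref{eq:x_} reduces the in-cluster dynamics to complete-graph decentralized SGD, $\xx_i^{t+1}=\xx_i^t-\eta\gg_i^t-\eta\rho c\,(\xx_i^t-\bar\xx_\cC^t)$ with $\bar\xx_\cC^t:=\tfrac1c\sum_{k\in\cC}\xx_k^t$, hence $\zz_{ij}^{t+1}=\zz_{ij}^t-\tfrac\eta2(\gg_i^t+\gg_j^t)-\eta\rho c\,(\zz_{ij}^t-\bar\xx_\cC^t)$ and $\xx_i^{t+1}-\xx_j^{t+1}=(1-\eta\rho c)(\xx_i^t-\xx_j^t)-\eta(\gg_i^t-\gg_j^t)$.

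Second, I apply the descent lemma for $\tilde f_{ij}$ along $\zz_{ij}^t$: $L$-smoothness on the displacement above, the conditional expectation, and \Cref{a:noise-bound} with batch size $b$ to bound the variance of $\tfrac12(\gg_i^t+\gg_j^t)$ produce a $-\tfrac\eta2\|\nabla\tilde f_{ij}(\zz_{ij}^t)\|^2$ term plus two consensus-type errors: the local-versus-midpoint mismatch $\|\nabla f_i(\xx_i^t)-\nabla f_i(\zz_{ij}^t)\|\le\tfrac L2\|\xx_i^t-\xx_j^t\|$ and the drift $\eta\rho c(\zz_{ij}^t-\bar\xx_\cC^t)$. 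Summing over all pairs $i,j\in\cC$ and noting $\tfrac1{c^2}\sum_{i,j}(\zz_{ij}^t-\bar\xx_\cC^t)=0$, the cross term $\sum_{i,j}\langle\nabla\tilde f_{ij}(\zz_{ij}^t),\zz_{ij}^t-\bar\xx_\cC^t\rangle$ is split by Young's inequality, with $\rho\ge\sqrt3 L/c$ ensuring that only a small multiple of $\sum\|\nabla\tilde f_{ij}\|^2$ is produced and the rest becomes a multiple of the consensus sum $\sum_{i,j}\|\xx_i^t-\xx_j^t\|^2$. Telescoping over $t=0,\dots,T-1$ and using $\zz_{ij}^0=\xx^0$ yields a bound of the schematic form $\tfrac1{c^2T}\sum_t\sum_{i,j}\E\|\nabla\tilde f_{ij}(\zz_{ij}^t)\|^2\lesssim\tfrac1{\eta c^2T}\sum_{i,j}\big(\tilde f_{ij}(\zz_{ij}^0)-\tilde f_{ij}^\star\big)+\tfrac{L\eta\sigma^2}{b}+(\text{const})\cdot\tfrac1{c^2T}\sum_t\sum_{i,j}\E\|\xx_i^t-\xx_j^t\|^2$.

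Third, I bound the consensus sum from $\|\xx_i^{t+1}-\xx_j^{t+1}\|\le(1-\eta\rho c)\|\xx_i^t-\xx_j^t\|+\eta\|\gg_i^t-\gg_j^t\|$: squaring and taking expectations, the noise contributes $O(\eta^2\sigma^2/b)$ (noises of $i$ and $j$ are independent, so the batch-size hypothesis $b\ge 4L\eta(c-2)\sigma^2/c^2$ renders it negligible at the cluster scale), and the bias is $\|\nabla f_i(\xx_i^t)-\nabla f_j(\xx_j^t)\|\le L\|\xx_i^t-\xx_j^t\|+\|\nabla f_i(\zz_{ij}^t)-\nabla f_j(\zz_{ij}^t)\|\le L\|\xx_i^t-\xx_j^t\|+2M_{ij}\|\nabla\tilde f_{ij}(\zz_{ij}^t)\|$, the last step being exactly where \Cref{a:collaborative} enters. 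With $\rho$ and $\eta$ chosen so that $\eta\rho c$ lies in the contractive range, the factor $(1-\eta\rho c)$ together with the $L\|\xx_i^t-\xx_j^t\|$ correction still contracts with slack, and summing over $t$ gives $\tfrac1{c^2T}\sum_t\sum_{i,j}\E\|\xx_i^t-\xx_j^t\|^2\lesssim\tfrac{M_{ij}^2}{\rho^2c^2}\cdot\tfrac1{c^2T}\sum_t\sum_{i,j}\E\|\nabla\tilde f_{ij}(\zz_{ij}^t)\|^2+(\text{noise})$. Plugging this into the descent bound, the hypothesis $M_{ij}^2<\tfrac15$ is precisely what makes the self-coefficient on $\sum\|\nabla\tilde f_{ij}\|^2$ strictly below $1$; moving it to the left leaves $\tfrac1{c^2T}\sum_t\sum_{i,j}\E\|\nabla\tilde f_{ij}(\zz_{ij}^t)\|^2\lesssim\tfrac1{\eta c^2T}\sum_{i,j}(\tilde f_{ij}(\zz_{ij}^0)-\tilde f_{ij}^\star)+\tfrac{L\eta\sigma^2}{b}$, and optimizing $\eta$ over the stated range balances the two terms to the claimed $\sqrt{L\sigma^2/(c^2T)\cdot\sum_{i,j}(\tilde f_{ij}(\zz_{ij}^0)-\tilde f_{ij}^\star)}$ rate; feeding this back into the consensus estimate produces the $\tfrac{6M_{ij}^2}{\rho^2c^2}\sqrt{\cdots}$ bound, the constants $3$ and $6$ coming from tracking the Young's-inequality weights.

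The crux is the coupling of the two recursions: the descent inequality's error is the consensus distance while the consensus recursion's driving term is the gradient norm, so neither bound stands alone, and making the substitution close with a gradient-norm self-coefficient strictly below $1$ is what forces the quantitative hypotheses $M_{ij}^2<\tfrac15$ and $\rho\ge\sqrt3 L/c$ — and this has to be done while simultaneously carrying the drift $\zz_{ij}^t-\bar\xx_\cC^t$ and the local-versus-midpoint mismatch. A secondary technical point is making rigorous that the stochastic, probabilistically triggered weight updates in \textsc{Client-Selection} remain pinned at the correct $0/1$ values (or quantifying the error they inject), which relies on unbiasedness of the mini-batch gradient inner product together with the sign information from \Cref{a:collaborative} and the $\zeta_{ik}$ hypothesis.
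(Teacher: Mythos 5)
Your proposal follows essentially the same route as the paper: the same two coupled inequalities (an $L$-smoothness descent bound for $\tilde f_{ij}$ evaluated at the pair midpoints $\zz_{ij}^t$, and a contraction recursion for the pairwise consensus distance whose gradient-difference term is controlled via \Cref{a:collaborative}), the same two-case sign argument pinning within-cluster weights at $1$ and cross-cluster weights at $0$, and the same telescoping and step-size balancing; substituting the exact weights first and closing the coupled system in the opposite order (consensus bound into the descent inequality, rather than the gradient bound into the consensus recursion as in Lemmas~\ref{lemma:sd}--\ref{lemma:consensus_distance}) is only a cosmetic reorganization. Two small caveats: your claim that unbiasedness keeps the stochastic weight updates pinned is exactly the full-gradient simplification the paper itself concedes in its limitations, and with the crude constants both you and the paper use, closing the coupling actually requires $M_{ij}\le 1/5$ (i.e.\ $M_{ij}^2\le 1/25$, as in Lemma~\ref{lemma:consensus_distance}) rather than the theorem's stated $M_{ij}^2<1/5$.
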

This theorem suggests that clients inside the same cluster gradually reach consensus. 
This cluster-level consensus model reaches stationary point of their losses by \eqref{eq:in_cluster}. Note that a larger penalization parameter $\rho$ and smaller values of $M_{ij}^2$ lead to faster convergence, which aligns with our expectations.
Note that $M_{ij}$ in \Cref{a:collaborative} measures how well i,j collaborate. A smaller $M_{ij}$ leads to better consensus distance in \Cref{theorem:main}, with $M_{ij}=0$ leading to identical data distribution. The following corollary states the convergence of norm of client gradient of model $\xx_i$.
\begin{corollary}\label{eq:corollary}
    Under same conditions as \Cref{theorem:main}, $\norm*{ \nabla f_i\left(\xx_{i}^{t}\right)}_2^2$ converges at a similar rate
    \begin{align*}
         \frac{1}{c^2T}\sum_{t=0}^{T-1} \sum_{i,j\in\cC} \norm*{ \nabla f_i\left(\xx_{i}^{t}\right)  }_2^2 
        \le & 4 \sqrt{ \frac{L\sigma^2}{c^2T} \sum_{i,j\in\cC} \left( \tilde{f}_{ij}\left(\zz_{ij}^{0}\right) - \tilde{f}_{ij}^\star  \right)}.
    \end{align*}
\end{corollary}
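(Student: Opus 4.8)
The plan is to control $\norm{\nabla f_i(\xx_i^t)}^2$ by two quantities already bounded in \Cref{theorem:main}: the stationarity measure $\norm{\nabla \tilde{f}_{ij}(\zz_{ij}^t)}^2$ of the cluster-averaged objective, and the consensus distance $\norm{\xx_i^t-\xx_j^t}^2$. I would first move from the client iterate $\xx_i^t$ to the midpoint $\zz_{ij}^t=\tfrac12(\xx_i^t+\xx_j^t)$: for any $\alpha>0$, Young's inequality gives
\[
\norm{\nabla f_i(\xx_i^t)}^2 \le (1+\alpha)\,\norm{\nabla f_i(\zz_{ij}^t)}^2 + \big(1+\tfrac1\alpha\big)\,\norm{\nabla f_i(\xx_i^t)-\nabla f_i(\zz_{ij}^t)}^2 ,
\]
and \Cref{a:smoothness} together with $\xx_i^t-\zz_{ij}^t=\tfrac12(\xx_i^t-\xx_j^t)$ bounds the second term by $(1+\tfrac1\alpha)\tfrac{L^2}{4}\norm{\xx_i^t-\xx_j^t}^2$.

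Next I would pass from $\nabla f_i(\zz_{ij}^t)$ to $\nabla \tilde{f}_{ij}(\zz_{ij}^t)$. Since $i,j$ lie in the same cluster $\cC$, \Cref{a:cluster} puts us in the first (collaborative) branch of \Cref{a:collaborative}, so $\norm{\nabla f_i(\zz)-\nabla f_j(\zz)}^2\le M_{ij}^2\,\norm{\nabla f_i(\zz)+\nabla f_j(\zz)}^2$ at $\zz=\zz_{ij}^t$. Decomposing $\nabla f_i$ into the symmetric part $\nabla\tilde{f}_{ij}$ (proportional to $\nabla f_i+\nabla f_j$) and the antisymmetric part $\nabla f_i-\nabla f_j$, and applying this inequality, yields $\norm{\nabla f_i(\zz_{ij}^t)}^2\le c_{M_{ij}}\,\norm{\nabla\tilde{f}_{ij}(\zz_{ij}^t)}^2$ with an explicit $c_{M_{ij}}$ that stays a small constant because $M_{ij}^2<\tfrac15$.

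Then I would average the resulting pointwise inequality over $t=0,\dots,T-1$ and over all pairs $i,j\in\cC$, divide by $c^2T$, take expectations, and substitute the two conclusions of \Cref{theorem:main}: the stationarity bound $\tfrac{1}{c^2T}\sum_{t}\sum_{i,j\in\cC}\E\big[\norm{\nabla\tilde{f}_{ij}(\zz_{ij}^t)}^2\big]\le 3\sqrt{\,\cdots\,}$ absorbs the $(1+\alpha)c_{M_{ij}}$ group, while the consensus bound $\tfrac{1}{c^2T}\sum_{t}\sum_{i,j\in\cC}\E\big[\norm{\xx_i^{t+1}-\xx_j^{t+1}}^2\big]\le\tfrac{6M_{ij}^2}{\rho^2c^2}\sqrt{\,\cdots\,}$ absorbs the remaining term; here $\rho\ge\tfrac{\sqrt3 L}{c}$ gives $\tfrac{L^2}{\rho^2c^2}\le\tfrac13$, so the consensus contribution is only a tiny multiple of $\sqrt{\,\cdots\,}$. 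A harmless index shift between the $\xx^{t+1}$ in the consensus bound and the $\xx^t$ in the argument costs at most one extra summand, absorbed into the average.

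The main obstacle is not analytic but arithmetic: one must pick $\alpha$ and combine $(1+\alpha)\,c_{M_{ij}}\cdot3$ with $(1+\tfrac1\alpha)\tfrac{L^2}{4}\cdot\tfrac{6M_{ij}^2}{\rho^2c^2}$ so that, under the standing hypotheses $M_{ij}^2<\tfrac15$ and $\rho\ge\tfrac{\sqrt3 L}{c}$, the total is at most the clean constant $4$ in the statement. Beyond \Cref{theorem:main}, the only ingredients are $L$-smoothness and the collaborativeness inequality \eqref{eq:in_cluster}; the proof is essentially the reduction above plus careful constant tracking.
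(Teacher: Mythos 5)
Your overall route mirrors the paper's proof: pass from $\xx_i^t$ to the midpoint $\zz_{ij}^t$ via $L$-smoothness (picking up $\tfrac{L^2}{4}\norm{\xx_i^t-\xx_j^t}^2$), relate the client gradient at the midpoint to $\nabla\tilde f_{ij}(\zz_{ij}^t)$ via \eqref{eq:in_cluster}, then substitute the stationarity and consensus bounds of \Cref{theorem:main} (the index shift is indeed harmless, since all clients start from the same $\xx^0$). The genuine gap is in the step you treat as routine: bounding $\norm{\nabla f_i(\zz_{ij}^t)}^2$ \emph{per term} by $c_{M_{ij}}\norm{\nabla\tilde f_{ij}(\zz_{ij}^t)}^2$. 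From the decomposition $\nabla f_i=\nabla\tilde f_{ij}+\tfrac12(\nabla f_i-\nabla f_j)$ together with \eqref{eq:in_cluster}, the best constant obtainable is $c_{M_{ij}}=(1+M_{ij})^2$ (tight when $\nabla f_i-\nabla f_j$ is parallel to $\nabla f_i+\nabla f_j$). With $M_{ij}^2$ allowed up to $1/5$ this is about $2.09$, so the leading contribution $(1+\alpha)\cdot 3\cdot c_{M_{ij}}$ already exceeds $6$ for any $\alpha>0$, and no choice of $\alpha$ can bring the total below the claimed $4$; even under the weaker reading $M_{ij}\le 1/5$ you get $3\cdot(1.2)^2=4.32>4$ before adding the consensus and Young terms. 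The paper avoids this loss by never bounding a single $\norm{\nabla f_i(\zz_{ij})}^2$ in isolation: it adds $\norm{\nabla f_i+\nabla f_j}^2$ to both sides of \eqref{eq:in_cluster} (the parallelogram law), giving $\norm{\nabla f_i(\zz_{ij})}^2+\norm{\nabla f_j(\zz_{ij})}^2\le 2(1+M_{ij}^2)\norm{\nabla\tilde f_{ij}(\zz_{ij})}^2$, and exploits that the double sum over $i,j\in\cC$ is symmetric in $(i,j)$, so the cross term you pay for cancels and the effective per-term factor is $1+M_{ij}^2$ rather than $(1+M_{ij})^2$. You need this symmetrized step for the constant $4$ to be within reach.

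A secondary remark on the "arithmetic" you defer: your explicit Young split is actually more careful than the paper's (which writes, in effect, $\norm{a}^2\le\norm{b}^2+\norm{a-b}^2$ with no cross-term penalty), but the $(1+1/\alpha)$ factor multiplies the consensus contribution $\tfrac{L^2}{4}\cdot\tfrac{6M_{ij}^2}{\rho^2c^2}\le\tfrac{M_{ij}^2}{2}$, and the optimized total $(1+\alpha)\,3(1+M_{ij}^2)+(1+1/\alpha)\tfrac{M_{ij}^2}{2}$ only comes in under $4$ when $M_{ij}^2$ is well below its nominal cap (the paper's own proof closes the gap by using $M_{ij}\le 1/5$, i.e.\ $M_{ij}^2\le 1/25$, at this point). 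So as sketched your argument establishes the $\cO\bigl(\sqrt{L\sigma^2/(c^2T)}\bigr)$ rate, but not the stated constant $4$; matching the statement requires both the parallelogram/symmetry step and the tighter handling of the cross term used in the paper.
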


\section{Experiments}
\label{sec:exps}

 In this section, we present three experiments to demonstrate the practical effectiveness of \ouralgo. 
In the first two experiments, we benchmark \ouralgo in both a cross-silo federated learning setup involving 8 clients and a cross-device setup with 80 clients, using the CIFAR-100 dataset for multi-task learning~\citep{krizhevsky2009learning}. In the third experiment, we train language models in subsets of Wiki-40B data set, while learning domain weights within a simplex~\citep{guo-etal-2020-wiki}. Compared to state-of-the-art personalized federated learning baselines, \ouralgo obtains personalized models with higher quality and obtains correct cluster structures. Details of experiments, including the description of architectures and the system setup, are deferred to Appendix \ref{sec:exp-details}.

Throughout the experiments, we use popular federated learning baselines such as FedAvg~\citep{mcmahan2017communication}, Federated clustering (abbreviated as FC)~\citep{werner2023provably}, Ditto~\citep{smith2021ditto}, IFCA~\citep{ghosh2020efficient}, and the oracle algorithm. The oracle baseline definition varies in each setup, and will be discussed case by case. Note that we additionally pass to clustering-based algorithms, i.e. FC and IFCA, the actual number of clusters. Their experiment stats reported in this section, such as accuracy, perplexity, include such advantage.

\subsection{Cross-silo federated learning experiment with 8 clients}
\label{subsec:small-exp}

In this experiment, we evaluate the performance of \ouralgo by comparing the averaged accuracies of local models against those of established collaborative learning baselines. Our objective is to assess how effectively \ouralgo discerns and leverages the structure of data clusters relative to other collaborative learning algorithms. 

We simulate a cross-silo multi-task environment where a single model trained across all clients yields poor performance, thus highlighting the necessity for client selection. Our experimental configuration consists of 4 clusters, each containing 2 clients utilizing the ResNet-9 model \citep{he2015deep}. To encourage collaboration within clusters, we randomly allocate half of the dataset to each client in a cluster.
To differentiate between clusters, we introduce label diversity by flipping the image labels in each cluster using distinct random seeds. This process ensures that each class maintains unique labels throughout all clusters, effectively creating a scenario where a universally trained model would not be optimal, thereby necessitating personalized models that can cater to the specific label distribution of each cluster.

In this context, collaboration among clients within the same cluster is advantageous, as their datasets are complementary. There are two primary reasons why collaboration between different clusters may not be beneficial: (1) the dataset available to clients within each cluster is identical, negating the incentive to collaborate with clients from other clusters, and (2) the label flipping across clusters could mean that inter-cluster collaboration might actually degrade local model performance.

Given these considerations, we designate an oracle algorithm for our scenario: FedAvg, implemented separately within each cluster. This ensures that collaboration is confined to where it is most beneficial. Additionally, the oracle collaboration matrix is defined to be a block diagonal matrix, with entries of 1 for pairs of clients within the same cluster, indicating collaboration, and entries of 0 for pairs from different clusters, indicating no collaboration. This matrix serves as a benchmark for the ideal collaboration structure in our simulated environment.

To enable the practical application of \ouralgo, we sample pairs of clients in each iteration to update their collaboration weights. We begin by examining the impact of various sampling strategies on the performance of \ouralgo. The primary approach involves sampling with a constant probability of $\cO(1/n)$. Additionally, we observe that \ouralgo identifies an appropriate collaboration matrix early in the training process, motivating the use of a time-step-dependent sampling rate, $\cO(1/t)$. We also implement a mixed strategy: employing the constant sampling rate, $\cO(1/n)$, for the initial 0.2\% of iterations, followed by a switch to the time-dependent sampling rate, $\cO(1/t)$, for the remainder of the training. A comparison of these strategies with the non-sampling oracle, where all pairs are updated in every iteration, is presented in Table \ref{tab:sampling}. While \ouralgo demonstrates consistent performance across all sampling strategies, achieving results close to those of the non-sampling oracle, the mixed strategy shows a slight performance advantage.

\begin{table}[t]
\centering
\caption{Comparison of the average performance of \ouralgo across different sampling strategies for updating the weights of client pairs in the collaboration matrix. All strategies demonstrate performance close to that of the non-sampling oracle. However, the mixed strategy, which combines a constant sampling rate at the start with a time-dependent rate during later training phases, shows superior performance.
}
\vspace{0.5cm}
\label{tab:sampling}
{
\begin{tabular}{ccc}
\toprule
         & Acc.(\%) & Loss  \\ \midrule
Constant ($\cO(1/n)$) & 73.05 & 1.104 \\
Time-dependent ($\cO(1/t)$) & 73.18 & 1.226 \\
Mixed & 74.77 & 1.081 \\
No Sampling (Oracle) & 74.93 & 1.278 \\
\bottomrule
\end{tabular}
}
\end{table}

To further assess performance, we trained \ouralgo and other baseline algorithms for a total of 40,000 iterations.  The accuracy diagram is presented in Figure~\ref{subfig:acc8}. We can observe that \ouralgo almost preserves the bound in Oracle. Moreover, \ouralgo reaches the fixed accuracy of 60\% in 4500 iterations, that is 30\% faster than Ditto. For better comparison, the value of accuracy and loss are reported in Table \ref{tab:acc-and-loss}. Additionally, we can observe the evolution of the collaboration matrix for clustering algorithms and \ouralgo in Figure \ref{fig:heatmaps}. \ouralgo starts to find the clients with similar label permutation as early as 300 iterations, and stabilize in less than 5000 iterations (12.5\% of the training phase). IFCA always degenerate to one fully connected cluster. FC, on the other hand, periodically suffers from clustering mistakes even at the end of training.

In Figure \ref{subfig:cross-silo-ablation}, we present the results of the cross-silo experiment under various configurations to further assess the robustness of \ouralgo. First, we modify the fraction of the dataset allocated to each client. Intuitively, the total amount of data available to a cluster directly impacts the performance of \ouralgo. Then, we experiment with different numbers of clusters, each containing two clients, and observe that the number of clusters does not significantly affect \ouralgo's accuracy. Additionally, we investigate the effect of varying the number of clients per cluster while maintaining a fixed total of four clusters. In this setup, the dataset is partitioned among clients within each cluster, resulting in less data per client as cluster size increases. Despite this, \ouralgo leverages collaboration to maintain robust performance even with larger cluster sizes.

\begin{figure}[b!]
    \centering
        \begin{subfigure}[b]{0.33\textwidth}
    \centering
    \includegraphics[width=\textwidth]{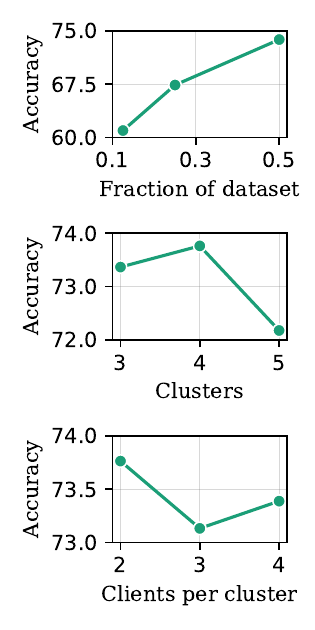}
    \caption{}
    \label{subfig:cross-silo-ablation}
    \end{subfigure}
    \hfill
    \begin{subfigure}[b]{0.65\textwidth}
    \centering
    \includegraphics[width=\textwidth]{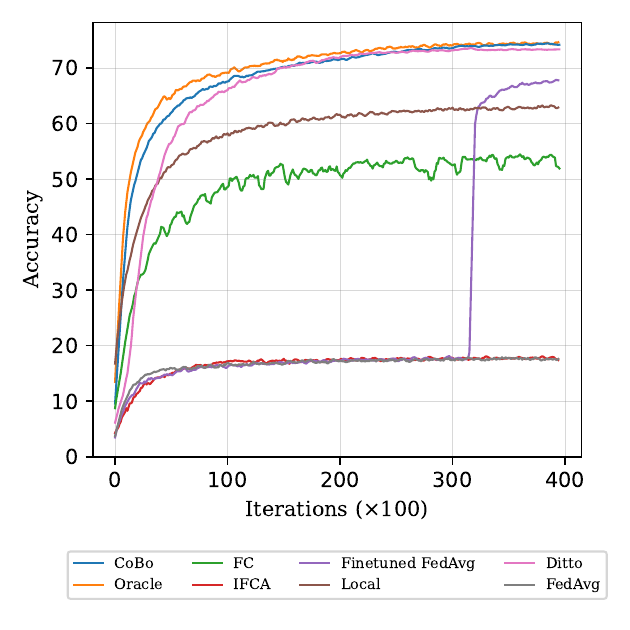}
    \caption{}
    \label{subfig:acc8}
    
    \end{subfigure}
    \caption{
    (\ref{subfig:cross-silo-ablation}) Average accuracy in cross-silo experiments with varying factors, including the fraction of the dataset available to clients, the number of clusters, and the number of clients per cluster. (\ref{subfig:acc8}) Average accuracy of personalized models for cross-silo federated learning with 8 clients. The "Oracle" denotes applying FedAvg to the clients with the same label permutation.  
    }
    
\end{figure}

\begin{figure}[t!]
    \centering
    \includegraphics[width=0.9\textwidth]{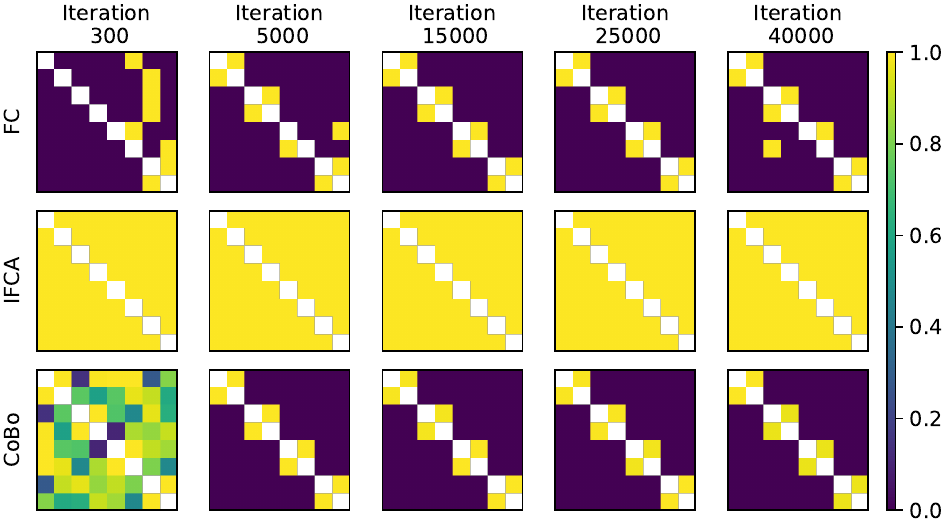}
    \caption{Collaboration matrices learned by Federated Clustering (FC), IFCA, and \ouralgo at different stages of training for cross-silo experiment with 8 clients. The diagonals are masked out. The oracle matrix is a block diagonal matrix with blocks of size 2. The collaboration matrix of \ouralgo already starts to look similar to oracle matrix within as low as 300 iterations (0.75\% of the total iterations), and converges to it within 5000 iterations (12.5\% of the total iterations). On the other hand, IFCA yields a fully-connected matrix while FC occasionally diverges from the achieved cluster structures (e.g., iterations 300, 5000, and 40000), even at the end of training.
    }
    \label{fig:heatmaps}
\end{figure}

\subsection{Cross-device experiment experiment with 80 clients}
\label{subsec:large-exp}

In this experiment, we demonstrate the performance of \ouralgo in a challenging cross-device federated learning setting with significant data heterogeneity. We create 10 clusters of varying sizes: 2 clusters consist of 6 clients each, another 2 comprise 7 clients each, and so on. Each cluster is allocated data from 10 distinct classes out of the total 100 classes available in the CIFAR-100 dataset, ensuring that the data across clusters are disjoint.
Within each cluster, the data are distributed uniformly at random among the clients. We then proceed to train individual ResNet-9 models \citep{he2015deep} owned by each client for a total of 20,000 iterations. This setup allows us to observe the behavior of \ouralgo and its ability to handle both the quantity and diversity of data across different client groups and cluster sizes.

We define the oracle algorithm and the corresponding collaboration matrix in the same manner as in Section \ref{subsec:small-exp}. Note that while we manually create the clusters, inter-cluster collaboration may still helpful in practice. It is impossible to know the actual groundtruth in this case.  
Consequently, we recognize that Oracle may not corresponds to the optimal performance. Nevertheless, this oracle still exhibits superior performance compared to other baselines that lack prior knowledge of the data distribution among clients, as evidenced by the results presented in Table \ref{tab:acc-and-loss}. The collaboration matrix and accuracy plots are differed to Figure \ref{fig:heatmaps80} and Figure \ref{fig:accuracy-80} in Appendix \ref{sec:exp-details}, respectively.

In this challenging experiment, \ouralgo surpasses all other baselines by at least 5.7\% in accuracy. This supports that \ouralgo scales well with the size of collaborative learning and exploits collaboration weights among clients at a fine-grain level.

 \subsection{Collaborative fine-tuning on language models}
 \label{subsec:lang-exp}
Recently, Large Language Models (LLMs) have become extremely popular due to their capability to effectively solve challenging tasks. Their downstream performances can be further enhanced by fine-tuning, however, the scarcity of data often yields to inferior performance, and necessitate collaboration~\citep{wagner2024personalizedcollaborativefinetuningondevice}. We therefore conduct an experiment of four clients, each having a pre-trained GPT-2 base model\footnote{https://github.com/karpathy/nanoGPT} with 124 million parameters in total~\citep{radford2019language}, and a subset of articles from the Wiki-40B dataset ~\citep{guo-etal-2020-wiki} with one of the four following languages: Catalan, Spanish, German, or Dutch. We use LoRA for Self-Attention and MLP layers for fine-tuning, which accounts for 0.47\% of the full parameters~\citep{hu2022lora}.

For data-hungry tasks, such as those involving LLMs, contributions from all domains are valuable. Clustering methods fall short in this aspect due to their binary, discrete outputs, which do not capture the nuanced degrees of collaboration needed. \ouralgo addresses this limitation by allowing for a continuous range of collaboration intensities, achieved by a simple yet effective modification to the projection domain in  \eqref{eq:w}. Specifically, we employ a probability simplex, denoted as
 $\Delta_i = \{w_{ij} \geq 0, \sum_{j}w_{ij} = 1\}$ as the domain of inner problem.

In Table \ref{tab:acc-and-loss} we compare the perplexity of \ouralgo with baselines after 500 iterations, when FedAvg converges. There is no oracle domain weights in this experiment due to the complicated coherence of languages. We therefore drop oracle algorithm in the table. 
\ouralgo obtains the best perplexity among all algorithms. In Figure \ref{fig:domain-weights}, we demonstrate the domain weights learned for the Catalan language. Overall, Catalan gives the highest collaboration weight to Spanish, which is reasonable considering the similarity between two languages. 

\begin{figure}[t!]
    \centering
    \includegraphics[width=0.9\textwidth]{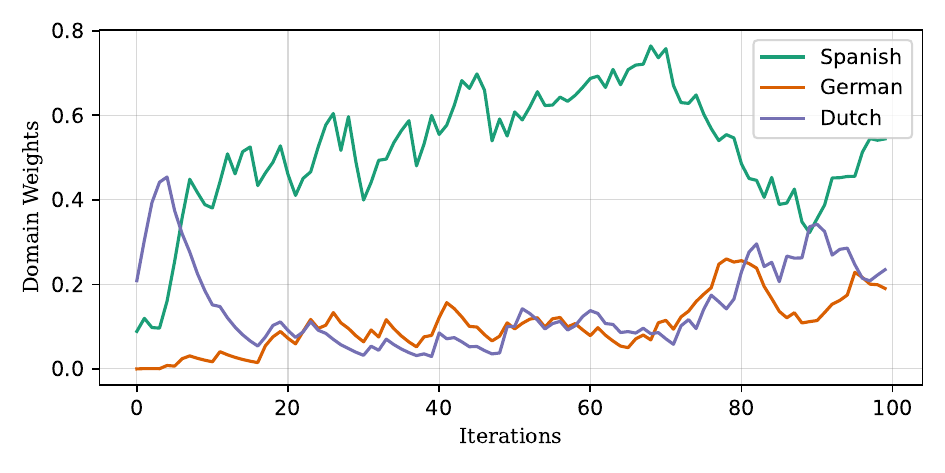}
    \caption{Domain weights found by \ouralgo for Catalan language. There are 4 domains in total: Catalan, Spanish, German, and Dutch. The curves are smoothed by exponential moving average.
    }
    \label{fig:domain-weights}
\end{figure}

\begin{table}[t]
\centering
\caption{Comparisons of model quality and fairness measure of personalized models for cross-silo experiment with 8 clients, and cross-device experiment with 80 clients, and the language modelling experiment with 4 clients having different languages. Federated clustering (FC) is not scalable with number of clients due to its $\cO(n^2)$ complexity, and therefore ignored in the cross-device fl experiment. The clustering algorithms IFCA and FC are not applicable to LLMs and there ignored. Note that Oracle is not defined in the LLMs experiment. The column ``Imp.(\%)'' demonstrates the percentage of clients with improved performance compared to local training.
}
\vspace{0.5cm}
\label{tab:acc-and-loss}
{
\resizebox{\textwidth}{!}{
\begin{tabular}{cccccccccc}
\toprule
       &   \multicolumn{3}{c}{Cross-silo} &  \multicolumn{3}{c}{Cross-device} & \multicolumn{2}{c}{Fine-tuning LLMs} \\\cmidrule(r){2-4} \cmidrule(r){5-7} \cmidrule(r){8-9}
       &  Acc.(\%) & Loss  & Imp.(\%) & Acc.(\%) & Loss & Imp.(\%) & Perplexity & Imp.(\%) \\ \midrule
Local  &  64.9 ± \footnotesize{0.1} & 1.67 & -  & 54.9 ± \footnotesize{0.1} &  1.40 & - & 41.26 ± \footnotesize{0.38} & -  \\
FedAvg & 18.8 ± \footnotesize{0.1} & 2.66 & 0 & 53.9 ± \footnotesize{0.1} & 1.79 & 29 & 64.84 ± \footnotesize{0.00} & 0\\
Fine-tuning FedAvg & 70.2 ± \footnotesize{0.2} & 1.77 & 0 & 58.9 ± 0.1 & 1.88 & 94 & 46.70 ± \footnotesize{0.07} & 0 \\
Ditto & 73.5 ± \footnotesize{0.3} & 1.55 & \textbf{100} & 70.3 ± \footnotesize{0.1} & 1.21 & \textbf{100} & 40.05 ± \footnotesize{0.01} & \textbf{100}  \\
IFCA & 18.6 ± \footnotesize{0.1}  &  2.75 & 0 & 45.6 ± \footnotesize{0.8}  & 2.15 & 4 & - & - \\
FC & 55.1 ± \footnotesize{0.4} & 1.79 & 0 & - & - & - & - & -  \\
\ouralgo & \textbf{74.6 ± \footnotesize{0.2}} & \textbf{1.08} & \textbf{100} & \textbf{79.6 ± \footnotesize{0.4}} & \textbf{0.97} & \textbf{100} & \textbf{39.28 ± \footnotesize{0.01}} & \textbf{100} \\
\midrule
Oracle & 75.4 ± \footnotesize{0.2} &  1.07 & 100 & 83.6 ± \footnotesize{0.3} & 0.70 & 100 & - & - \\
\bottomrule
\end{tabular}
}
}
\end{table}

\section{Related Work}

\paragraph{Personalized federated learning.}   Personalized federated learning has received significant attention in recent years due to its potential to tailor models to individual user data while benefit from collaboration~\cite{sattler2019,t2020personalized, tan2022towards,fortfederated, kulkarni2020survey}. There are various flavors of personalized federated learning. \textsc{Ditto} trains personalized models by incorporating a regularization term that penalizes the divergence from a global model~\citep{smith2021ditto}. 
Many personalization works assume that clients are drawn from clusters. For example, \citet{marfoq2021mixture} use K-nearest neighbors (KNN) to determine collaborators. \citet{mansour2020, ghosh2020efficient, werner2023provably} develop $K$ personalized models and assign clients to clusters based on criteria such as minimum function values or gradient similarities. Additionally, \citet{even2022sample} provided theoretical insights by establishing lower bounds, which demonstrate that the optimal gradient filtering strategy involves clustering clients with identical optima.

\paragraph{Federated Learning with Client Selection}
In federated learning, client selection is often performed by simultaneously minimizing task losses and collaborative weights in a single-level objective function. \citet{zantedeschi2020fully} minimize task losses augmented with a penalization term $w_{ij}\norm{\xx_i-\xx_j}_2^2$, similar to our outer problem. However, optimizing $w_{ij}$ directly can lead to a degenerate solution ($w_{ij}=0$), which necessitates an additional penalization for small $w_{ij}$ values.
\citet{smith2017federated} approach multi-task learning by minimizing task losses with a more sophisticated penalization term that accounts for the relationships between tasks. This formulation requires the client-selection function to be consistent with client selection, which can negatively impact performance.
Apart from multi-task federated learning, a similar bilevel optimization formulation has been used by \citet{le2023refined} to find a sparse mixing matrix while training a consensus model in the outer problem.

\paragraph{Bilevel optimization and alternating optimization.} Bilevel optimization is a powerful tool which models a broad range of problems, such as reinforcement learning~\cite{dai2018sbeed,nachum2020reinforcement,huang2021biadam,hu2020biased}, and linearly-solvable Markov decision process~\cite{dai2017learning}, meta-learning~\cite{finn2017model}, etc. A typical bilevel optimization problem, as the name indicates, consists of an outer and an inner optimization problem whose variables are inter-dependent. Typical bilevel optimization solvers requires hessian information which is usually expansive to acquire ~\citep{finn2017model}. On the other hand, alternating optimization tools has been used be used to solve bilevel optimization problem~\citep{bezdek2003convergence,chen2021closing}. While in general there is no universal convergence guarantees for alternative optimizations, the special structure of our inner problem ensures the convergence of \ouralgo to the stationary point.

\section{Conclusions}

Existing collaborative learning algorithms only allow coarse-grained collaboration, which leads to inferior performance in practice. To address this issue, we model collaborative learning as a special bilevel optimization problem  
where client selection is based on the optimization of a linear function of gradient alignment measure for each pair of clients. 
In addition, we propose an efficient SGD-type alternating optimization algorithm \ouralgo which is scalable, elastic, and enjoy theoretical guarantees. Besides, \ouralgo empirically outperforms popular personalized federated learning algorithms in realistic collaborative learning problems.

\paragraph{Limitations.} Due to time constraints, the proof of \Cref{theorem:main} is provided for a simplified scenario in which the inner problem employs the full gradient as per \eqref{eq:w}, and the minimization problem is solved exactly. Additionally, in the cross-device federated learning experiments involving 80 clients, each algorithm was executed only once due to the same time limitations. We will conduct repeated experiments to ensure the robustness of our findings in future iterations of this work.

\bibliography{report}

\begin{thebibliography}{36}
\providecommand{\natexlab}[1]{#1}
\providecommand{\url}[1]{\texttt{#1}}
\expandafter\ifx\csname urlstyle\endcsname\relax
  \providecommand{\doi}[1]{doi: #1}\else
  \providecommand{\doi}{doi: \begingroup \urlstyle{rm}\Url}\fi

\bibitem[Arjevani et~al.(2023)Arjevani, Carmon, Duchi, Foster, Srebro, and
  Woodworth]{arjevani2023lower}
Yossi Arjevani, Yair Carmon, John~C Duchi, Dylan~J Foster, Nathan Srebro, and
  Blake Woodworth.
\newblock Lower bounds for non-convex stochastic optimization.
\newblock \emph{Mathematical Programming}, 199\penalty0 (1):\penalty0 165--214,
  2023.

\bibitem[Bezdek and Hathaway(2003)]{bezdek2003convergence}
James~C Bezdek and Richard~J Hathaway.
\newblock Convergence of alternating optimization.
\newblock \emph{Neural, Parallel \& Scientific Computations}, 11\penalty0
  (4):\penalty0 351--368, 2003.

\bibitem[Chen et~al.(2021)Chen, Sun, and Yin]{chen2021closing}
Tianyi Chen, Yuejiao Sun, and Wotao Yin.
\newblock Closing the gap: Tighter analysis of alternating stochastic gradient
  methods for bilevel problems.
\newblock \emph{Advances in Neural Information Processing Systems},
  34:\penalty0 25294--25307, 2021.

\bibitem[Dai et~al.(2017)Dai, He, Pan, Boots, and Song]{dai2017learning}
Bo~Dai, Niao He, Yunpeng Pan, Byron Boots, and Le~Song.
\newblock Learning from conditional distributions via dual embeddings.
\newblock In Aarti Singh and Xiaojin~(Jerry) Zhu, editors, \emph{Proceedings of
  the 20th International Conference on Artificial Intelligence and Statistics,
  {AISTATS} 2017, 20-22 April 2017, Fort Lauderdale, FL, {USA}}, volume~54 of
  \emph{Proceedings of Machine Learning Research}, pages 1458--1467. {PMLR},
  2017.
\newblock URL \url{http://proceedings.mlr.press/v54/dai17a.html}.

\bibitem[Dai et~al.(2018)Dai, Shaw, Li, Xiao, He, Liu, Chen, and
  Song]{dai2018sbeed}
Bo~Dai, Albert Shaw, Lihong Li, Lin Xiao, Niao He, Zhen Liu, Jianshu Chen, and
  Le~Song.
\newblock {SBEED:} convergent reinforcement learning with nonlinear function
  approximation.
\newblock In Jennifer~G. Dy and Andreas Krause, editors, \emph{Proceedings of
  the 35th International Conference on Machine Learning, {ICML} 2018,
  Stockholmsm{\"{a}}ssan, Stockholm, Sweden, July 10-15, 2018}, volume~80 of
  \emph{Proceedings of Machine Learning Research}, pages 1133--1142. {PMLR},
  2018.
\newblock URL \url{http://proceedings.mlr.press/v80/dai18c.html}.

\bibitem[Even et~al.(2022)Even, Massouli{\'e}, and Scaman]{even2022sample}
Mathieu Even, Laurent Massouli{\'e}, and Kevin Scaman.
\newblock On sample optimality in personalized collaborative and federated
  learning.
\newblock \emph{Advances in Neural Information Processing Systems},
  35:\penalty0 212--225, 2022.

\bibitem[Finn et~al.(2017)Finn, Abbeel, and Levine]{finn2017model}
Chelsea Finn, Pieter Abbeel, and Sergey Levine.
\newblock Model-agnostic meta-learning for fast adaptation of deep networks.
\newblock In Doina Precup and Yee~Whye Teh, editors, \emph{Proceedings of the
  34th International Conference on Machine Learning, {ICML} 2017, Sydney, NSW,
  Australia, 6-11 August 2017}, volume~70 of \emph{Proceedings of Machine
  Learning Research}, pages 1126--1135. {PMLR}, 2017.
\newblock URL \url{http://proceedings.mlr.press/v70/finn17a.html}.

\bibitem[Fort()]{fortfederated}
Gersende Fort.
\newblock Federated expectation maximization with heterogeneity mitigation and
  variance reduction.

\bibitem[Frank et~al.(1956)Frank, Wolfe, et~al.]{frank1956algorithm}
Marguerite Frank, Philip Wolfe, et~al.
\newblock An algorithm for quadratic programming.
\newblock \emph{Naval research logistics quarterly}, 3\penalty0 (1-2):\penalty0
  95--110, 1956.

\bibitem[Ghosh et~al.(2020)Ghosh, Chung, Yin, and
  Ramchandran]{ghosh2020efficient}
Avishek Ghosh, Jichan Chung, Dong Yin, and Kannan Ramchandran.
\newblock An efficient framework for clustered federated learning.
\newblock \emph{Advances in Neural Information Processing Systems},
  33:\penalty0 19586--19597, 2020.

\bibitem[Guo et~al.(2020)Guo, Dai, Vrande{\v{c}}i{\'c}, and
  Al-Rfou]{guo-etal-2020-wiki}
Mandy Guo, Zihang Dai, Denny Vrande{\v{c}}i{\'c}, and Rami Al-Rfou.
\newblock {W}iki-40{B}: Multilingual language model dataset.
\newblock In \emph{LREC - Proceedings of the Twelfth Language Resources and
  Evaluation Conference}, pages 2440--2452. European Language Resources
  Association, May 2020.
\newblock ISBN 979-10-95546-34-4.
\newblock URL \url{https://aclanthology.org/2020.lrec-1.297}.

\bibitem[He et~al.(2015)He, Zhang, Ren, and Sun]{he2015deep}
Kaiming He, Xiangyu Zhang, Shaoqing Ren, and Jian Sun.
\newblock Deep residual learning for image recognition, 2015.

\bibitem[Hu et~al.(2022)Hu, yelong shen, Wallis, Allen-Zhu, Li, Wang, Wang, and
  Chen]{hu2022lora}
Edward~J Hu, yelong shen, Phillip Wallis, Zeyuan Allen-Zhu, Yuanzhi Li, Shean
  Wang, Lu~Wang, and Weizhu Chen.
\newblock Lo{RA}: Low-rank adaptation of large language models.
\newblock In \emph{International Conference on Learning Representations}, 2022.
\newblock URL \url{https://openreview.net/forum?id=nZeVKeeFYf9}.

\bibitem[Hu et~al.(2020)Hu, Zhang, Chen, and He]{hu2020biased}
Yifan Hu, Siqi Zhang, Xin Chen, and Niao He.
\newblock Biased stochastic first-order methods for conditional stochastic
  optimization and applications in meta learning.
\newblock In Hugo Larochelle, Marc'Aurelio Ranzato, Raia Hadsell,
  Maria{-}Florina Balcan, and Hsuan{-}Tien Lin, editors, \emph{Advances in
  Neural Information Processing Systems 33: Annual Conference on Neural
  Information Processing Systems 2020, NeurIPS 2020, December 6-12, 2020,
  virtual}, 2020.
\newblock URL
  \url{https://proceedings.neurips.cc/paper/2020/hash/1cdf14d1e3699d61d237cf76ce1c2dca-Abstract.html}.

\bibitem[Huang et~al.(2021)Huang, Li, and Gao]{huang2021biadam}
Feihu Huang, Junyi Li, and Shangqian Gao.
\newblock Biadam: Fast adaptive bilevel optimization methods.
\newblock \emph{arXiv preprint arXiv:2106.11396}, 2021.

\bibitem[Jaggi(2013)]{jaggi2013revisiting}
Martin Jaggi.
\newblock Revisiting frank-wolfe: Projection-free sparse convex optimization.
\newblock In \emph{International conference on machine learning}, pages
  427--435. PMLR, 2013.

\bibitem[Kairouz et~al.(2021)Kairouz, McMahan, Avent, Bellet, Bennis, Bhagoji,
  Bonawitz, Charles, Cormode, Cummings, et~al.]{kairouz2021advances}
Peter Kairouz, H~Brendan McMahan, Brendan Avent, Aur{\'e}lien Bellet, Mehdi
  Bennis, Arjun~Nitin Bhagoji, Kallista Bonawitz, Zachary Charles, Graham
  Cormode, Rachel Cummings, et~al.
\newblock Advances and open problems in federated learning.
\newblock \emph{Foundations and trends{\textregistered} in machine learning},
  14\penalty0 (1--2):\penalty0 1--210, 2021.

\bibitem[Koloskova et~al.(2020)Koloskova, Loizou, Boreiri, Jaggi, and
  Stich]{koloskova2021unified}
Anastasia Koloskova, Nicolas Loizou, Sadra Boreiri, Martin Jaggi, and
  Sebastian~U. Stich.
\newblock A unified theory of decentralized {SGD} with changing topology and
  local updates.
\newblock In \emph{Proceedings of the 37th International Conference on Machine
  Learning, {ICML} 2020, 13-18 July 2020, Virtual Event}, volume 119 of
  \emph{Proceedings of Machine Learning Research}, pages 5381--5393. {PMLR},
  2020.
\newblock URL \url{http://proceedings.mlr.press/v119/koloskova20a.html}.

\bibitem[Krizhevsky et~al.(2009)Krizhevsky, Hinton,
  et~al.]{krizhevsky2009learning}
Alex Krizhevsky, Geoffrey Hinton, et~al.
\newblock Learning multiple layers of features from tiny images.
\newblock 2009.

\bibitem[Kulkarni et~al.(2020)Kulkarni, Kulkarni, and Pant]{kulkarni2020survey}
Viraj Kulkarni, Milind Kulkarni, and Aniruddha Pant.
\newblock Survey of personalization techniques for federated learning.
\newblock In \emph{2020 Fourth World Conference on Smart Trends in Systems,
  Security and Sustainability (WorldS4)}, pages 794--797. IEEE, 2020.

\bibitem[Le~Bars et~al.(2023)Le~Bars, Bellet, Tommasi, Lavoie, and
  Kermarrec]{le2023refined}
Batiste Le~Bars, Aur{\'e}lien Bellet, Marc Tommasi, Erick Lavoie, and
  Anne-Marie Kermarrec.
\newblock Refined convergence and topology learning for decentralized sgd with
  heterogeneous data.
\newblock In \emph{International Conference on Artificial Intelligence and
  Statistics}, pages 1672--1702. PMLR, 2023.

\bibitem[Li et~al.(2021)Li, Hu, Beirami, and Smith]{smith2021ditto}
T.~Li, Shengyuan Hu, Ahmad Beirami, and Virginia Smith.
\newblock Fair and robust federated learning through personalization.
\newblock In \emph{38th International Conference on Machine Learning}, 2021.

\bibitem[Lian et~al.(2017)Lian, Zhang, Zhang, Hsieh, Zhang, and
  Liu]{lian2017decentralized}
Xiangru Lian, Ce~Zhang, Huan Zhang, Cho{-}Jui Hsieh, Wei Zhang, and Ji~Liu.
\newblock Can decentralized algorithms outperform centralized algorithms? {A}
  case study for decentralized parallel stochastic gradient descent.
\newblock In Isabelle Guyon, Ulrike von Luxburg, Samy Bengio, Hanna~M. Wallach,
  Rob Fergus, S.~V.~N. Vishwanathan, and Roman Garnett, editors, \emph{Advances
  in Neural Information Processing Systems 30: Annual Conference on Neural
  Information Processing Systems 2017, December 4-9, 2017, Long Beach, CA,
  {USA}}, pages 5330--5340, 2017.
\newblock URL
  \url{https://proceedings.neurips.cc/paper/2017/hash/f75526659f31040afeb61cb7133e4e6d-Abstract.html}.

\bibitem[Mansour et~al.(2021)Mansour, Mohri, Ro, and Suresh]{mansour2020}
Yishay Mansour, Mehryar Mohri, Jae Ro, and Ananda~Theertha Suresh.
\newblock Three approaches for personalization with applications to federated
  learning.
\newblock 2021.

\bibitem[Marfoq et~al.(2021)Marfoq, Neglia, Kameni, and
  Vidal]{marfoq2021mixture}
Othmane Marfoq, Giovanni Neglia, Laetitia Kameni, and Richard Vidal.
\newblock Federated multi-task learning under a mixture of distributions.
\newblock In \emph{Proceedings of the 35th International Conference on Machine
  Learning}, volume~34, 2021.

\bibitem[McMahan et~al.(2017)McMahan, Moore, Ramage, Hampson, and
  y~Arcas]{mcmahan2017communication}
Brendan McMahan, Eider Moore, Daniel Ramage, Seth Hampson, and Blaise~Aguera
  y~Arcas.
\newblock Communication-efficient learning of deep networks from decentralized
  data.
\newblock In \emph{Artificial intelligence and statistics}, pages 1273--1282.
  PMLR, 2017.

\bibitem[Mohri et~al.(2019)Mohri, Sivek, and Suresh]{mohri2019agnostic}
Mehryar Mohri, Gary Sivek, and Ananda~Theertha Suresh.
\newblock Agnostic federated learning.
\newblock In \emph{International Conference on Machine Learning}, pages
  4615--4625. PMLR, 2019.

\bibitem[Nachum and Dai(2020)]{nachum2020reinforcement}
Ofir Nachum and Bo~Dai.
\newblock Reinforcement learning via fenchel-rockafellar duality.
\newblock \emph{ArXiv preprint}, abs/2001.01866, 2020.
\newblock URL \url{https://arxiv.org/abs/2001.01866}.

\bibitem[Radford et~al.(2019)Radford, Wu, Child, Luan, Amodei, and
  Sutskever]{radford2019language}
Alec Radford, Jeff Wu, Rewon Child, David Luan, Dario Amodei, and Ilya
  Sutskever.
\newblock Language models are unsupervised multitask learners.
\newblock 2019.

\bibitem[Sattler et~al.(2019)Sattler, Muller, and Samek]{sattler2019}
Felix Sattler, Klaus-Robert Muller, and Wojciech Samek.
\newblock Clustered federated learning: Model-agnostic distributed multi-task
  optimization under privacy constraints.
\newblock 2019.

\bibitem[Smith et~al.(2017)Smith, Chiang, Sanjabi, and
  Talwalkar]{smith2017federated}
Virginia Smith, Chao-Kai Chiang, Maziar Sanjabi, and Ameet~S Talwalkar.
\newblock Federated multi-task learning.
\newblock \emph{Advances in neural information processing systems}, 30, 2017.

\bibitem[T~Dinh et~al.(2020)T~Dinh, Tran, and Nguyen]{t2020personalized}
Canh T~Dinh, Nguyen Tran, and Josh Nguyen.
\newblock Personalized federated learning with moreau envelopes.
\newblock \emph{Advances in Neural Information Processing Systems},
  33:\penalty0 21394--21405, 2020.

\bibitem[Tan et~al.(2022)Tan, Yu, Cui, and Yang]{tan2022towards}
Alysa~Ziying Tan, Han Yu, Lizhen Cui, and Qiang Yang.
\newblock Towards personalized federated learning.
\newblock \emph{IEEE Transactions on Neural Networks and Learning Systems},
  2022.

\bibitem[Wagner et~al.(2024)Wagner, Fan, and
  Jaggi]{wagner2024personalizedcollaborativefinetuningondevice}
Nicolas Wagner, Dongyang Fan, and Martin Jaggi.
\newblock Personalized collaborative fine-tuning for on-device large language
  models, 2024.
\newblock URL \url{https://arxiv.org/abs/2404.09753}.

\bibitem[Werner et~al.(2023)Werner, He, Karimireddy, Jordan, and
  Jaggi]{werner2023provably}
Mariel Werner, Lie He, Sai~Praneeth Karimireddy, Michael Jordan, and Martin
  Jaggi.
\newblock Provably personalized and robust federated learning.
\newblock \emph{arXiv preprint arXiv:2306.08393}, 2023.

\bibitem[Zantedeschi et~al.(2020)Zantedeschi, Bellet, and
  Tommasi]{zantedeschi2020fully}
Valentina Zantedeschi, Aur{\'e}lien Bellet, and Marc Tommasi.
\newblock Fully decentralized joint learning of personalized models and
  collaboration graphs.
\newblock In \emph{International Conference on Artificial Intelligence and
  Statistics}, pages 864--874. PMLR, 2020.

\end{thebibliography}

\clearpage

\appendix

\section{Theory}

Let $\zz_{ij}^t := \frac{1}{2}(\xx_i^{t}+\xx_j^{t})$ be the average iterate of $\xx_i^{t}$ and $\xx_j^{t}$ and $\tilde{f}_{ij}:= \frac{1}{2} (f_i + f_j)$ be their averaged objective.  
\begin{lemma}\label{lemma:sd}
    Suppose \Cref{a:smoothness} hold true. Let $\eta\le\frac{1}{2L}$. Then for $i,j$ in the same cluster $\cC$ of size $c$
    \begin{align*}
        \sum_{i,j\in\cC}\norm*{\nabla \tilde{f}_{ij}\left(\zz_{ij}^{t}\right)  }_2^2
        \le& \frac{2}{\eta} \sum_{i,j\in\cC}\left( \tilde{f}_{ij}\left(\zz_{ij}^{t}\right) - \tilde{f}_{ij}\left(\zz_{ij}^{t+1}\right)  \right) \\
        & +  \left(\frac{3L^2}{4} + 3c^2\rho^2 + \frac{ L\eta\rho^2 (c-2) 2 \sigma^2}{b} \right) \sum_{i,j\in\cC} \norm*{\xx_i^t-\xx_j^t}_2^2 \\
        & + 3nc\rho^2 \sum_{i\in\cC} \sum_{k=1}^n |w_{ik}^{t+1} - w_{ik}^\star|^2 \norm*{\xx_i^t - \xx_k^t}_2^2 + \frac{c^2 L\eta \sigma^2}{2}.
    \end{align*}
\end{lemma}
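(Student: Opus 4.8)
The plan is to run a one-step SGD-descent estimate for the averaged objective $\tilde f_{ij}=\tfrac12(f_i+f_j)$ along the midpoint iterate $\zz_{ij}^t$, treating the penalization drift as a perturbation, and then sum over all ordered pairs $(i,j)\in\cC\times\cC$. Averaging the model update \eqref{eq:x_} over $i$ and $j$ gives $\zz_{ij}^{t+1}=\zz_{ij}^t-\eta\bigl(\gg_{ij}^t+\tfrac{\rho}{2}\dd_{ij}^t\bigr)$, where $\gg_{ij}^t:=\tfrac12(\gg_i^t+\gg_j^t)$ is the averaged stochastic gradient (living at the \emph{local} points $\xx_i^t,\xx_j^t$, not at $\zz_{ij}^t$) and $\dd_{ij}^t:=\sum_{k=1}^n w_{ik}^{t+1}(\xx_i^t-\xx_k^t)+\sum_{k=1}^n w_{jk}^{t+1}(\xx_j^t-\xx_k^t)$ is the penalization drift. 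Since $\tilde f_{ij}$ is $L$-smooth by \Cref{a:smoothness}, the descent inequality, the polarization identity $\langle a,b\rangle=\tfrac12(\norm{a}^2+\norm{b}^2-\norm{a-b}^2)$, and the step-size bound $\eta\le\tfrac1{2L}$ (which makes the coefficient of the squared expected-update-direction term nonpositive, so it is dropped) yield, in conditional expectation over the step-$t$ randomness,
\[
\E\tilde f_{ij}(\zz_{ij}^{t+1})-\tilde f_{ij}(\zz_{ij}^t)\le-\tfrac{\eta}{2}\norm{\nabla\tilde f_{ij}(\zz_{ij}^t)}^2+\tfrac{\eta}{2}\norm{\nabla\tilde f_{ij}(\zz_{ij}^t)-\hh_{ij}^t}^2+\tfrac{L\eta^2}{2}V_{ij}^t,
\]
with $\hh_{ij}^t=\tfrac12(\nabla f_i(\xx_i^t)+\nabla f_j(\xx_j^t))+\tfrac{\rho}{2}\dd_{ij}^t$ the expected update direction and $V_{ij}^t$ the variance of the update. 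The variance splits into the $x$-gradient noise (bounded by $\sigma^2/2$ per pair via \Cref{a:noise-bound}, which after summing and rescaling by $2/\eta$ gives the closing $\tfrac{c^2L\eta\sigma^2}{2}$) and the variance of $\dd_{ij}^t$ induced by the mini-batch-$b$ inner update \eqref{eq:w_}, which contributes the $\tfrac{L\eta\rho^2(c-2)2\sigma^2}{b}$ term.

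Next I would bound the direction error by $\norm{\nabla\tilde f_{ij}(\zz_{ij}^t)-\hh_{ij}^t}^2\le2\norm{\nabla\tilde f_{ij}(\zz_{ij}^t)-\tfrac12(\nabla f_i(\xx_i^t)+\nabla f_j(\xx_j^t))}^2+\tfrac{\rho^2}{2}\norm{\dd_{ij}^t}^2$. The first term is $O(L^2)\norm{\xx_i^t-\xx_j^t}^2$ by $L$-smoothness and $\norm{\xx_i^t-\zz_{ij}^t}=\norm{\xx_j^t-\zz_{ij}^t}=\tfrac12\norm{\xx_i^t-\xx_j^t}$. For $\norm{\dd_{ij}^t}^2$ the key is to pivot on the exact inner maximizer $w_{ik}^\star$: under \Cref{a:collaborative,a:cluster} together with the theorem's hypotheses ($M_{ij}^2<\tfrac15$ and $\zeta^2_{ik}\ge\norm{\nabla f_i+\nabla f_k}^2$ for all $\xx$ and $k$), one has $w_{ik}^\star=1$ for $k\in\cC$ and $w_{ik}^\star=0$ for $k\notin\cC$, since within a cluster $\langle\nabla f_i,\nabla f_k\rangle\ge\tfrac{1-M_{ik}^2}{4}\norm{\nabla f_i+\nabla f_k}^2\ge0$, while across clusters the $\zeta$-condition forces $\langle\nabla f_i,\nabla f_k\rangle\le0$. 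Writing $w_{ik}^{t+1}=w_{ik}^\star+(w_{ik}^{t+1}-w_{ik}^\star)$ decomposes $\dd_{ij}^t$ into the ideal cluster-centering part $2c(\zz_{ij}^t-\bar{\xx}_\cC^t)$, with $\bar{\xx}_\cC^t=\tfrac1c\sum_{k\in\cC}\xx_k^t$, plus the two error sums $\sum_k(w_{ik}^{t+1}-w_{ik}^\star)(\xx_i^t-\xx_k^t)$ and $\sum_k(w_{jk}^{t+1}-w_{jk}^\star)(\xx_j^t-\xx_k^t)$. Bounding each sum by (number of nonzero terms) times the sum of its squared summands, and using $\norm{\zz_{ij}^t-\xx_k^t}^2\le\tfrac12\norm{\xx_i^t-\xx_k^t}^2+\tfrac12\norm{\xx_j^t-\xx_k^t}^2$, controls $\norm{\dd_{ij}^t}^2$ by a multiple of $c\sum_{k\in\cC}(\norm{\xx_i^t-\xx_k^t}^2+\norm{\xx_j^t-\xx_k^t}^2)$ plus $n$ times the $w$-error terms.

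Finally I would sum the per-pair estimate over all $c^2$ ordered pairs, using the counting identities $\sum_{i,j\in\cC}\sum_{k\in\cC}\norm{\xx_i^t-\xx_k^t}^2=c\sum_{i,k\in\cC}\norm{\xx_i^t-\xx_k^t}^2$ and $\sum_{i,j\in\cC}\sum_{k=1}^n |w_{ik}^{t+1}-w_{ik}^\star|^2\norm{\xx_i^t-\xx_k^t}^2=c\sum_{i\in\cC}\sum_{k=1}^n |w_{ik}^{t+1}-w_{ik}^\star|^2\norm{\xx_i^t-\xx_k^t}^2$, multiply through by $2/\eta$ to match the stated normalization, and collect coefficients: $\tfrac{3L^2}{4}$ from the gradient-mismatch term (tightened by also tracking the mismatch inside the retained part of the direction term), $3c^2\rho^2$ from the cluster-centering drift, $\tfrac{L\eta\rho^2(c-2)2\sigma^2}{b}$ from the inner-update variance, $3nc\rho^2$ on the $w$-error terms, and $\tfrac{c^2L\eta\sigma^2}{2}$ from the $x$-gradient noise. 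This reproduces the claimed inequality.

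The main obstacle is the penalization drift $\dd_{ij}^t$: one has to characterize the maximizers $w_{ik}^\star$ from the collaborativeness/cluster structure so that the clean part of the drift is exactly a within-cluster consensus term proportional to $\zz_{ij}^t-\bar{\xx}_\cC^t$ while every out-of-cluster contribution (and every imperfect within-cluster weight) is absorbed into the $|w_{ik}^{t+1}-w_{ik}^\star|^2$ term, keep the sum-of-norms expansions tight enough to land the exact constants, and---if the stochastic inner update is retained---correctly propagate $\Var(\dd_{ij}^t)$ to obtain the $(c-2)\sigma^2/b$ contribution, which relies on $\xx_i^t-\xx_i^t=\0$ removing the self-term and on the active within-cluster weights sitting at the boundary of $[0,1]$.
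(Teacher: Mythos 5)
Your proposal follows essentially the same route as the paper's proof: a one-step smoothness/descent estimate for $\tilde f_{ij}$ at the midpoint iterate with the polarization identity and $\eta\le\frac{1}{2L}$, a decomposition of the update-direction error by pivoting on $w^\star$ (gradient mismatch at $\zz_{ij}^t$ vs.\ $\xx_i^t,\xx_j^t$, ideal within-cluster drift, and $|w^{t+1}-w^\star|$ error) bounded via Cauchy--Schwarz with support sizes $c$ and $n$, separate treatment of the two noise sources (outer stochastic gradients giving $\tfrac{c^2L\eta\sigma^2}{2}$, inner mini-batch-$b$ weight updates giving the $(c-2)$ variance term), and a final summation over pairs. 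The only cosmetic difference is that you open with a two-way split of the direction error before splitting the drift (the paper uses a direct three-way split, which is what actually yields the coefficients $\tfrac{3L^2}{4}$, $3c^2\rho^2$, $3nc\rho^2$), so the bookkeeping must be done as in the paper to land those exact constants.
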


\subsection{Proof of \Cref{lemma:sd}}
\begin{proof} 
Let ${h}_i^t$ and ${h}_j^t$ be independent and unbiased estimates of $\nabla f_i(z_{ij}^t)$, $\nabla f_j(z_{ij}^t)$ respectively. The variance of ${h}_i^t$ has a variance of $\frac{\sigma^2}{b}$. Let's denote $\mathbb{E}_g[\cdot]:=\mathbb{E}_{g_1,\ldots, g_n}[\cdot|\zz_i^t]$ and $\mathbb{E}_h[\cdot]:=\mathbb{E}_{h_1,\ldots, h_n}[\cdot]$ and let $\mathbb{E}[\cdot]=\mathbb{E}_h[\mathbb{E}_g[\cdot]]$. 
    By the L-smoothness assumption~\Cref{a:smoothness} and bounded noise assumption~\Cref{a:noise-bound}
\begin{align*}
    \mathbb{E}_h\mathbb{E}_g\left[\tilde{f}_{ij}\left(z_{ij}^{t+1}\right)\right]
    \le& \tilde{f}_{ij}\left(z_{ij}^{t}\right) + 
    \left\langle \nabla \tilde{f}_{ij}\left(z_{ij}^{t}\right), \mathbb{E}_h\mathbb{E}_g\left[z_{ij}^{t+1} - z_{ij}^{t}\right] \right\rangle + \frac{L}{2} \lVert\mathbb{E}_h\mathbb{E}_g[z_{ij}^{t+1} - z_{ij}^{t}]\rVert_2^2 \\
    & + \frac{L}{2}\underbrace{ \mathbb{E}_h\mathbb{E}_g\left[\lVert z_{ij}^{t+1} - z_{ij}^{t} - \mathbb{E}_h\mathbb{E}_g[z_{ij}^{t+1} - z_{ij}^{t}] \rVert_2^2 \right]}_{\cT_{ij}}.
\end{align*}
Here
    \begin{align*}
        \E\left[\tilde{f}_{ij}\left(\zz_{ij}^{t+1}\right)\right]
        \le& \E\left[\tilde{f}_{ij}\left(\zz_{ij}^{t}\right)
        + \left\langle \nabla \tilde{f}_{ij}\left(\zz_{ij}^{t}\right), \zz_{ij}^{t+1} - \zz_{ij}^{t} \right\rangle + \frac{L}{2} \norm*{\zz_{ij}^{t+1} - \zz_{ij}^{t}}_2^2\right] + \frac{L}{2}\cT_{ij} \\
        =&\tilde{f}_{ij}\left(\zz_{ij}^{t}\right) + 
        \left\langle \nabla \tilde{f}_{ij}\left(\zz_{ij}^{t}\right), \E\left[\zz_{ij}^{t+1} - \zz_{ij}^{t}\right] \right\rangle + \frac{L}{2} \norm*{\E[\zz_{ij}^{t+1} - \zz_{ij}^{t}]}_2^2 \\
        & + \frac{L}{2} \E\left[\norm*{\zz_{ij}^{t+1} - \zz_{ij}^{t} - \E[\zz_{ij}^{t+1} - \zz_{ij}^{t}]}_2^2 \right] + \frac{L}{2}\cT_{ij}\\
        =&\tilde{f}_{ij}\left(\zz_{ij}^{t}\right) + 
        \left\langle \nabla \tilde{f}_{ij}\left(\zz_{ij}^{t}\right), \E\left[\zz_{ij}^{t+1} - \zz_{ij}^{t}\right] \right\rangle + \frac{L}{2} \norm*{\E[\zz_{ij}^{t+1} - \zz_{ij}^{t}]}_2^2 \\
        & + \frac{L\eta^2}{8} \E\left[\norm*{\gg_i^{t+1} + \gg_j^{t+1} - (\nabla f_i(\xx_i^t) + \nabla f_j(\xx_j^t) ) }_2^2 \right] + \frac{L}{2}\cT_{ij}\\
        \le&\tilde{f}_{ij}\left(\zz_{ij}^{t}\right) + 
        \left\langle \nabla \tilde{f}_{ij}\left(\zz_{ij}^{t}\right), \E\left[\zz_{ij}^{t+1} - \zz_{ij}^{t}\right] \right\rangle \\ 
        & + \frac{L}{2} \norm*{\E[\zz_{ij}^{t+1} - \zz_{ij}^{t}]}_2^2 
        + \frac{L\eta^2\sigma^2}{4} + \frac{L}{2}\cT_{ij}.
    \end{align*}
    Expand the inner product with equality $-\langle \xx, \yy\rangle = -\frac{1}{2} \norm{\xx}_2^2 - \frac{1}{2} \norm{\yy}_2^2 + \frac{1}{2} \norm{\xx-\yy}_2^2 $
    \begin{align*}
        \E\left[\tilde{f}_{ij}\left(\zz_{ij}^{t+1}\right)\right]
        \le& \tilde{f}_{ij}\left(z_{ij}^{t}\right) 
         + \frac{\eta}{2} \norm*{\frac{\E[\zz_{ij}^{t+1}] - \zz_{ij}^{t}}{\eta} + \nabla \tilde{f}_{ij}\left(\zz_{ij}^{t}\right) }_2^2 
         - \frac{\eta}{2} \norm*{\frac{\E[\zz_{ij}^{t+1}] - \zz_{ij}^{t}}{\eta}  }_2^2  \\
         &- \frac{\eta}{2} \norm*{\nabla \tilde{f}_{ij}\left(\zz_{ij}^{t}\right)  }_2^2  
         + \frac{L\eta^2}{2} \norm*{\frac{\E[\zz_{ij}^{t+1}] - \zz_{ij}^{t}}{\eta}  }_2^2 +  \frac{L\eta^2\sigma^2}{4}  + \frac{L}{2} \cT_{ij}\\
        \le& \tilde{f}_{ij}\left(\zz_{ij}^{t}\right) 
         + \frac{\eta}{2} \norm*{\frac{\E[\zz_{ij}^{t+1}] - \zz_{ij}^{t}}{\eta} + \nabla \tilde{f}_{ij}\left(\zz_{ij}^{t}\right) }_2^2 \\
         &- \frac{\eta}{4} \norm*{\frac{\E[\zz_{ij}^{t+1}]- \zz_{ij}^{t}}{\eta}  }_2^2  
         - \frac{\eta}{2} \norm*{\nabla \tilde{f}_{ij}\left(\zz_{ij}^{t}\right)  }_2^2 + \frac{L\eta^2\sigma^2}{4} + \frac{L}{2} \cT_{ij}
    \end{align*}
    where $\eta\le\frac{1}{2L}$ is applied in the last inequality. Then
    \begin{align*}
        \norm*{\nabla \tilde{f}_{ij}\left(\zz_{ij}^{t}\right)  }_2^2
        \le& \frac{2}{\eta} \left( \tilde{f}_{ij}\left(\zz_{ij}^{t}\right) - \E[\tilde{f}_{ij}\left(\zz_{ij}^{t+1}\right) ] \right)
        + \frac{L\eta\sigma^2}{2}
        + \underbrace{\norm*{\frac{\E[\zz_{ij}^{t+1}] - \zz_{ij}^{t}}{\eta} + \nabla \tilde{f}_{ij}\left(\zz_{ij}^{t}\right) }_2^2}_{=:\cT} \\
        &+ \frac{2}{\eta}\frac{L}{2}\cT_{ij}.
    \end{align*}
    The $\cT$ expand the iterate $\xx_i^{t+1}$ using \eqref{eq:x}
    \begin{align*}
    \mathcal{T}:=&\norm*{\frac{\E[\zz_{ij}^{t+1}] - \zz_{ij}^{t}}{\eta} + \nabla \tilde{f}_{ij}\left(\zz_{ij}^{t}\right) }_2^2 \\
    =& \norm*{
    \nabla \tilde{f}_{ij}\left(\zz_{ij}^{t}\right)
    - \frac{1}{2} \left(
        \nabla f_i(\xx_i^t) + \nabla f_j(\xx_j^t)
    \right)
    -  \frac{\rho}{2} \left(\sum_{k=1}^n w_{ik}^{t+1} (\xx_i^t - \xx_k^t)
     +\sum_{k=1}^n w_{jk}^{t+1} (\xx_j^t - \xx_k^t) \right)
    }_2^2  \\
    \le& 3\underbrace{\norm*{
    \nabla \tilde{f}_{ij}\left(
\zz_{ij}^{t}\right)
    -  \frac{1}{2} \left(\nabla f_i(\xx_i^t) + \nabla f_j(\xx_j^t)\right)
    }_2^2}_{\mathcal{T}_1}
    + 3\underbrace{\norm*{
     \frac{\rho}{2} \left(\sum_{k=1}^n w_{ik}^{\star} (\xx_i^t - \xx_k^t)
     +\sum_{k=1}^n w_{jk}^{\star} (\xx_j^t - \xx_k^t) \right)
    }_2^2}_{\mathcal{T}_2} \\
    &+ 3\underbrace{\norm*{
     \frac{\rho}{2} \left(\sum_{k=1}^n (w_{ik}^{t+1}-w_{ik}^\star) (\xx_i^t - \xx_k^t)
     +\sum_{k=1}^n (w_{jk}^{t+1}-w_{jk}^\star) (\xx_j^t - \xx_k^t) \right)
    }_2^2}_{\mathcal{T}_3}.
    \end{align*}
    \noindent \textbf{Bound $\mathcal{T}_1$: } Use L-smoothness of $f_i$ and $f_j$. Take expectation with respect to  $\gg_i^t$ and $\gg_j^t$ which are unbiased estimates of $\nabla f_i(\xx_i^t)$ and $\nabla f_j(\xx_j^t)$
    \begin{align*}
        \E[\mathcal{T}_1]
        =& \norm*{\nabla \tilde{f}_{ij}\left(\zz_{ij}^{t}\right)
    -  \frac{1}{2} \left(\nabla f_i(\xx_i^t) + \nabla f_j(\xx_j^t) \right)}_2^2 \\
        \le& \frac{L^2}{2} \norm*{\zz^t_{ij} - \xx_i^t}_2^2 + \frac{L^2}{2} \norm*{\zz^t_{ij} - \xx_j^t}_2^2 \\
        =& \frac{L^2}{4}\norm*{\xx_i^t-\xx_j^t}_2^2.
    \end{align*}

    \noindent \textbf{Bound $\mathcal{T}_2$: }  Use Cauchy-Schwarz inequality and $\cC$ has a cluster size of $c$
    \begin{align*}
        \mathcal{T}_2 \le \frac{c\rho^2}{2} \left( \sum_{k=1}^n w_{ik}^\star \norm*{\xx_i^t - \xx_k^t}_2^2
        +  \sum_{k=1}^n w_{jk}^\star \norm*{\xx_j^t - \xx_k^t}_2^2 \right).
    \end{align*}

    \noindent \textbf{Bound $\mathcal{T}_3$: }  Use Cauchy-Schwarz inequality
    \begin{align*}
        \mathcal{T}_3 \le \frac{n\rho^2}{2}  \left(\sum_{k=1}^n |w_{ik}^{t+1} - w_{ik}^\star|^2 \norm*{\xx_i^t - \xx_k^t}_2^2
        +  \sum_{k=1}^n |w_{jk}^{t+1} - w_{jk}^\star|^2 \norm*{\xx_j^t - \xx_k^t}_2^2 \right).
    \end{align*}

    Sum over all of the $i,j$  in the same cluster $\cC$ yields
    \begin{align*}
        \sum_{i,j\in\cC}\norm*{\nabla \tilde{f}_{ij}\left(\zz_{ij}^{t}\right)  }_2^2
        \le& \frac{2}{\eta} \sum_{i,j\in\cC}\left( \tilde{f}_{ij}\left(\zz_{ij}^{t}\right) - \E[\tilde{f}_{ij}\left(\zz_{ij}^{t+1}\right) ] \right)
        + \frac{c^2L\eta\sigma^2}{2}
        +  \frac{3L^2}{4}\sum_{i,j\in\cC} \norm*{\xx_i^t-\xx_j^t}_2^2 
        \\
        & + 3c^2\rho^2  \sum_{i,j\in\cC} \norm*{\xx_i^t - \xx_j^t}_2^2 
        + 3nc\rho^2 \sum_{i\in\cC} \sum_{k=1}^n |w_{ik}^{t+1} - w_{ik}^\star|^2 \norm*{\xx_i^t - \xx_k^t}_2^2 \\
        & + \frac{2}{\eta} \frac{L}{2} \sum_{i,j\cC} \cT_{ij} \\
        \le& \frac{2}{\eta} \sum_{i,j\in\cC}\left( \tilde{f}_{ij}\left(\zz_{ij}^{t}\right) - \tilde{f}_{ij}\left(\zz_{ij}^{t+1}\right)  \right)
        +  \left(\frac{3L^2}{4} + 3c^2\rho^2\right) \sum_{i,j\in\cC} \norm*{\xx_i^t-\xx_j^t}_2^2 \\
        & + 3nc\rho^2 \sum_{i\in\cC} \sum_{k=1}^n |w_{ik}^{t+1} - w_{ik}^\star|^2 \norm*{\xx_i^t - \xx_k^t}_2^2 + \frac{c^2L\eta\sigma^2}{2} 
        + \frac{2}{\eta}\frac{L}{2} \sum_{i,j\cC} \cT_{ij}.
    \end{align*}

    Note that $\cT_{ij}$ can be bounded as follows
    \begin{align*}
        & \mathbb{E}_h\mathbb{E}_g \left[\lVert z_{ij}^{t+1} - z_{ij}^{t} - \mathbb{E}_h\mathbb{E}_g[z_{ij}^{t+1} - z_{ij}^{t}] \rVert_2^2 \right] \\
        =& \mathbb{E}_h\mathbb{E}_g \left[\lVert z_{ij}^{t+1} - z_{ij}^{t} \pm \mathbb{E}_h[z_{ij}^{t+1} - z_{ij}^{t}] - \mathbb{E}_h\mathbb{E}_g[z_{ij}^{t+1} - z_{ij}^{t}] \rVert_2^2 \right] \\ 
        =&  \mathbb{E}_h\mathbb{E}_g \left[\lVert \mathbb{E}_h[z_{ij}^{t+1} - z_{ij}^{t}] - \mathbb{E}_h\mathbb{E}_g[z_{ij}^{t+1} - z_{ij}^{t}] \rVert_2^2 \right]+ \mathbb{E}_h\mathbb{E}_g \left[\lVert z_{ij}^{t+1} - z_{ij}^{t} - \mathbb{E}_h[z_{ij}^{t+1} - z_{ij}^{t}]  \rVert_2^2 \right].
    \end{align*}
    Plug in the above equality to the above inequality
    \begin{align*}
        \mathbb{E}_h\mathbb{E}_g\left[\tilde{f}_{ij}\left(z_{ij}^{t+1}\right)\right]
        \le& \tilde{f}_{ij}\left(z_{ij}^{t}\right) + 
        \left\langle \nabla \tilde{f}_{ij}\left(z_{ij}^{t}\right), \mathbb{E}_h\mathbb{E}_g\left[z_{ij}^{t+1} - z_{ij}^{t}\right] \right\rangle + \frac{L}{2} \lVert\mathbb{E}_h\mathbb{E}_g[z_{ij}^{t+1} - z_{ij}^{t}]\rVert_2^2 \\
        & + \mathbb{E}_h\mathbb{E}_g \left[\lVert \mathbb{E}_h[z_{ij}^{t+1} - z_{ij}^{t}] - \mathbb{E}_h\mathbb{E}_g[z_{ij}^{t+1} - z_{ij}^{t}] \rVert_2^2 \right] \\
        & + \mathbb{E}_h\mathbb{E}_g \left[\lVert z_{ij}^{t+1} - z_{ij}^{t} - \mathbb{E}_h[z_{ij}^{t+1} - z_{ij}^{t}]  \rVert_2^2 \right].
    \end{align*}
    The last term can be expanded as follows
    \begin{align*}
        &\mathbb{E}_h\mathbb{E}_g \left[\lVert z_{ij}^{t+1} - z_{ij}^{t} - \mathbb{E}_h[z_{ij}^{t+1} - z_{ij}^{t}]  \rVert_2^2 \right] \\
        =&\mathbb{E}_h\mathbb{E}_g \left[ \left\lVert \frac{\eta}{2} (g_i^t + g_j^t) + \frac{\eta\rho}{2}\sum_{k=1}^n (w_{ik}^{t+1} (x_i^t - x_k^t) + w_{jk}^{t+1} (x_j^t - x_k^t)) \right.\right. \\
        & \qquad\left.\left. - \mathbb{E}_h \left[\frac{\eta}{2} (g_i^t + g_j^t) + \frac{\eta\rho}{2}\sum_{k=1}^n (w_{ik}^{t+1} (x_i^t - x_k^t) + w_{jk}^{t+1} (x_j^t - x_k^t)) \right]  \right\rVert_2^2 \right] \\
        =&\mathbb{E}_h\left[ \left\lVert \frac{\eta\rho}{2}\sum_{k=1}^n ((w_{ik}^{t+1} - \mathbb{E}_h[w_{ik}^{t+1}]) (x_i^t - x_k^t) + (w_{jk}^{t+1}- \mathbb{E}_h[w_{jk}^{t+1}]) (x_j^t - x_k^t))  \right\rVert_2^2 \right] \\
        =&\frac{\eta^2\rho^2}{4} \sum_{k\neq i,j} \left(\mathbb{E}_h \left[ \left\lVert w_{ik}^{t+1} - \mathbb{E}_h[w_{ik}^{t+1}] \right\rVert^2_2 \right]  \lVert x_i^t - x_k^t\rVert_2^2 + \mathbb{E}_h \left[ \left\lVert w_{jk}^{t+1} - \mathbb{E}_h[w_{jk}^{t+1}] \right\rVert^2_2 \right]  \lVert x_j^t - x_k^t\rVert_2^2 \right)
    \end{align*}
    where we use the independence of random variables in the last equality. Average over i, j yields
    \begin{align*}
    &\frac{1}{c^2}\sum_{ij}\mathbb{E}_h\mathbb{E}_g \left[\lVert z_{ij}^{t+1} - z_{ij}^{t} - \mathbb{E}_h[z_{ij}^{t+1} - z_{ij}^{t}]  \rVert_2^2 \right] \\
    =&\frac{\eta^2\rho^2(c-2)}{2c^2} \sum_{i,j} \mathbb{E}_h \left[ \left\lVert w_{ij}^{t+1} - \mathbb{E}_h[w_{ij}^{t+1}] \right\rVert^2_2 \right]  \lVert x_i^t - x_j^t\rVert_2^2  \\
    \le&\frac{\eta^2\rho^2(c-2)}{2c^2} \frac{4\sigma^2}{b} \sum_{i,j}  \lVert x_i^t - x_j^t\rVert_2^2.
    \end{align*}
    Then
    \begin{align*}
        \sum_{i,j\in\cC}\norm*{\nabla \tilde{f}_{ij}\left(\zz_{ij}^{t}\right)  }_2^2
        \le& \frac{2}{\eta} \sum_{i,j\in\cC}\left( \tilde{f}_{ij}\left(\zz_{ij}^{t}\right) - \tilde{f}_{ij}\left(\zz_{ij}^{t+1}\right)  \right)
        +  \left(\frac{3L^2}{4} + 3c^2\rho^2\right) \sum_{i,j\in\cC} \norm*{\xx_i^t-\xx_j^t}_2^2 \\
        & + 3nc\rho^2 \sum_{i\in\cC} \sum_{k=1}^n |w_{ik}^{t+1} - w_{ik}^\star|^2 \norm*{\xx_i^t - \xx_k^t}_2^2 + \frac{c^2L\eta\sigma^2}{2} \\
        & + \frac{2}{\eta}\frac{L}{2} \frac{\eta^2\rho^2(c-2)}{2} \frac{4\sigma^2}{b} \sum_{i,j}  \lVert x_i^t - x_j^t\rVert_2^2 \\
        =& \frac{2}{\eta} \sum_{i,j\in\cC}\left( \tilde{f}_{ij}\left(\zz_{ij}^{t}\right) - \tilde{f}_{ij}\left(\zz_{ij}^{t+1}\right)  \right) \\
        & +  \left(\frac{3L^2}{4} + 3c^2\rho^2 + \frac{ L\eta\rho^2 (c-2) 2 \sigma^2}{b} \right) \sum_{i,j\in\cC} \norm*{\xx_i^t-\xx_j^t}_2^2 \\
        & + 3nc\rho^2 \sum_{i\in\cC} \sum_{k=1}^n |w_{ik}^{t+1} - w_{ik}^\star|^2 \norm*{\xx_i^t - \xx_k^t}_2^2 + \frac{c^2L\eta\sigma^2}{2}.
    \end{align*}
\end{proof}
\begin{lemma}\label{lemma:consensus_distance}
    Suppose $M_{ij}\le\frac{1}{5}$. Let $\rho\ge \frac{\sqrt{3}L}{c}$ and $\eta\le\frac{1}{2\rho c}\le\frac{1}{2\sqrt{3}L}$ then
    \begin{align*}
        \sum_{i,j\in\cC}  \norm*{\xx_i^{t+1} - \xx_j^{t+1}}_2^2 
        \le& \left(1-\eta\rho c\right) \sum_{i,j\in\cC}\norm*{\xx_i^{t} - \xx_j^{t}}_2^2 \\
        &+5n\eta\rho\sum_{i\in\cC} \sum_{k=1}^n|w_{ik}^t-w_{ik}^\star|^2 \norm{\xx_i^t - \xx_k^t}_2^2 \\
        &+ \frac{6 M^2_{ij}}{\rho c} 
        \sum_{i,j\in\cC}\left( \tilde{f}_{ij}\left(\zz_{ij}^{t}\right) - \tilde{f}_{ij}\left(\zz_{ij}^{t+1}\right)  \right)
        +  \frac{3\eta M^2_{ij}}{\rho c}  \frac{c^2 L\eta \sigma^2}{2}.
    \end{align*}

\end{lemma}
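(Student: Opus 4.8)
The plan is to turn the model update into a one-step contraction for the intra-cluster consensus gap and to pay for the residual gradient misalignment using \Cref{lemma:sd}. First I would fix a cluster $\cC$ of size $c$, a pair $i,j\in\cC$, and subtract the updates \eqref{eq:x} for $\xx_i$ and $\xx_j$. Since clients in $\cC$ are pairwise collaborative, \eqref{eq:in_cluster} with $M_{ij}\le\tfrac15$ gives $\langle\nabla f_i,\nabla f_k\rangle\ge 0$, so $w_{ik}^\star=1$ for every $k\in\cC$; conversely, the extra hypothesis $\zeta_{ik}^2\ge\norm{\nabla f_i(\xx)+\nabla f_k(\xx)}_2^2$ together with the definition \eqref{eq:outside_cluster} of $\zeta_{ik}$ forces $\langle\nabla f_i,\nabla f_k\rangle\le 0$, hence $w_{ik}^\star=0$ for $k\notin\cC$. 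Writing $w_{ik}^{t+1}=w_{ik}^\star+(w_{ik}^{t+1}-w_{ik}^\star)$, the two $w^\star$-weighted penalization sums telescope to $\rho c(\xx_i^t-\xx_j^t)$, so that
\begin{align*}
\xx_i^{t+1}-\xx_j^{t+1}=(1-\eta\rho c)(\xx_i^t-\xx_j^t)-\eta\bigl(\nabla f_i(\xx_i^t)-\nabla f_j(\xx_j^t)\bigr)-\eta\rho\,\ee_{ij}^t,
\end{align*}
where the selection error is $\ee_{ij}^t:=\sum_k(w_{ik}^{t+1}-w_{ik}^\star)(\xx_i^t-\xx_k^t)-\sum_k(w_{jk}^{t+1}-w_{jk}^\star)(\xx_j^t-\xx_k^t)$.

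I would then square this identity, applying Young's inequality $\norm{a+b}_2^2\le(1+\theta)\norm{a}_2^2+(1+\theta^{-1})\norm{b}_2^2$ with $\theta$ of order $\eta\rho c$ (so that $(1+\theta)(1-\eta\rho c)^2\le 1-\eta\rho c$), and further weighted splits to isolate the gradient term from $\ee_{ij}^t$. Summed over $i,j\in\cC$, the $\ee_{ij}^t$ contribution is controlled by Cauchy--Schwarz over its $n$ summands; together with the analogous selection-error term that \Cref{lemma:sd} will contribute, this produces the $5n\eta\rho\sum_{i\in\cC}\sum_k|w_{ik}^{t+1}-w_{ik}^\star|^2\norm{\xx_i^t-\xx_k^t}_2^2$ term (so the statement's $w^t_{ik}$ should read $w^{t+1}_{ik}$). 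For the gradient term I would re-center at $\zz_{ij}^t=\tfrac12(\xx_i^t+\xx_j^t)$: by \Cref{a:smoothness} the re-centering defect $(\nabla f_i(\xx_i^t)-\nabla f_i(\zz_{ij}^t))-(\nabla f_j(\xx_j^t)-\nabla f_j(\zz_{ij}^t))$ has norm at most $L\norm{\xx_i^t-\xx_j^t}_2$, a contraction-type term that is folded back into the $\norm{\xx_i^t-\xx_j^t}_2^2$ coefficient --- this is precisely why $\rho\ge\sqrt3L/c$ and $\eta\le\tfrac1{2\rho c}$ are imposed, as they keep that coefficient below $1-\eta\rho c$; and the centered difference is bounded via Collaborativeness \eqref{eq:in_cluster}, namely $\norm{\nabla f_i(\zz_{ij}^t)-\nabla f_j(\zz_{ij}^t)}_2^2\le 4M_{ij}^2\norm{\nabla\tilde{f}_{ij}(\zz_{ij}^t)}_2^2$, where $M_{ij}\le\tfrac15$ keeps all these secondary quantities small.

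Finally I would invoke \Cref{lemma:sd} to replace $\sum_{i,j\in\cC}\norm{\nabla\tilde{f}_{ij}(\zz_{ij}^t)}_2^2$ by its telescoping bound $\tfrac2\eta\sum_{i,j\in\cC}\bigl(\tilde{f}_{ij}(\zz_{ij}^t)-\tilde{f}_{ij}(\zz_{ij}^{t+1})\bigr)$ plus an $\norm{\xx_i^t-\xx_j^t}_2^2$ term with coefficient $\tfrac{3L^2}4+3c^2\rho^2+\tfrac{L\eta\rho^2(c-2)2\sigma^2}{b}$, a selection-error term, and the noise term $\tfrac{c^2L\eta\sigma^2}2$. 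With the Young weights tuned so that $\norm{\nabla\tilde{f}_{ij}(\zz_{ij}^t)}_2^2$ ends up with coefficient $\tfrac{3\eta M_{ij}^2}{\rho c}$, these pieces become exactly the $\tfrac{6M_{ij}^2}{\rho c}$ telescoping term and the $\tfrac{3\eta M_{ij}^2}{\rho c}\cdot\tfrac{c^2L\eta\sigma^2}2$ noise term in the statement. I expect the main obstacle to be this final accounting: showing that all the leftover $\norm{\xx_i^t-\xx_j^t}_2^2$ mass --- from the re-centering defect, from $\tfrac{3\eta M_{ij}^2}{\rho c}\bigl(\tfrac{3L^2}4+3c^2\rho^2+\tfrac{L\eta\rho^2(c-2)2\sigma^2}{b}\bigr)$, and from every Young slack --- collapses under the single factor $1-\eta\rho c$, and that the leftover selection-error mass stays under $5n\eta\rho$. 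This is where $\rho\ge\sqrt3L/c$, $\eta\le\tfrac1{2\rho c}$, $M_{ij}\le\tfrac15$, and the mini-batch condition $b\ge\tfrac2{c^2}2L\eta(c-2)\sigma^2$ get consumed; the remaining manipulations are routine.
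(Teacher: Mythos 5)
Your proposal follows essentially the same route as the paper's proof: subtract the two updates and re-center the penalty at $w^\star$ to get a contraction plus a selection-error term and a gradient-difference term, square via a Young/Jensen-type split with Cauchy--Schwarz on the selection error, re-center the gradient difference at $\zz_{ij}^t$ using smoothness and Collaborativeness, and then invoke \Cref{lemma:sd} to turn $\sum_{i,j\in\cC}\norm*{\nabla \tilde{f}_{ij}(\zz_{ij}^t)}_2^2$ into the telescoping and noise terms, absorbing all leftover coefficients with $\rho\ge\sqrt{3}L/c$, $\eta\rho c\le\tfrac12$, $M_{ij}\le\tfrac15$, and the mini-batch condition. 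The remaining differences are purely bookkeeping-level (your contraction factor $1-\eta\rho c$ versus the paper's intermediate $1-2\eta\rho c$, your explicit factor $4M_{ij}^2\norm*{\nabla\tilde f_{ij}}_2^2$, and your observation that the statement's $w^t_{ik}$ should read $w^{t+1}_{ik}$) and do not change the argument.
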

\begin{proof}
    Expand $\xx_i^{t+1} - \xx_j^{t+1}$ with \eqref{eq:x}
    \begin{align*}
        \xx_i^{t+1} - \xx_j^{t+1}
        =\xx_i^{t} - \xx_j^{t} - \eta\rho \sum_{k=1}^n \left(w_{ik}^{t+1} (\xx_i^t - \xx_k^t) - w_{jk}^{t+1} (\xx_j^t - \xx_k^t) \right)
        - \eta\left(\nabla f_i(\xx_i^t) - \nabla f_j(\xx_j^t)\right).
    \end{align*}
    As $i$ and $j$ belong to the same cluster (i.e., $w_{ij}^\star=1$), we add $\pm2\eta\rho \sum_{k=1}^n w_{ik}^\star (\xx_i^t - \xx_k^t)$
    \begin{align*}
        \xx_i^{t+1} - \xx_j^{t+1}=&(1-2\eta\rho c) (\xx_i^{t} - \xx_j^{t})
        - \eta\rho \sum_{k=1}^n \left((w_{ik}^{t+1}-w_{ik}^\star) (\xx_i^t - \xx_k^t) - (w_{jk}^{t+1}-w_{jk}^\star) (\xx_j^t - \xx_k^t) \right) \\
        &- \eta\left(\nabla f_i(\xx_i^t) - \nabla f_j(\xx_j^t)\right).
    \end{align*}
    Compute the norm of $\xx_i^{t+1} - \xx_j^{t+1}$ and choose $\eta\rho\le \frac{1}{2c}$ to use Jensen's inequality
    \begin{align*}
        \norm*{\xx_i^{t+1} - \xx_j^{t+1}}_2^2 
        \le& 
         (1-2\eta\rho c)\norm*{\xx_i^{t} - \xx_j^{t}}_2^2 \\
         &+ 2\eta\rho c\left\lVert \frac{1}{2c} \sum_{k=1}^n \left((w_{ik}^{t+1}-w_{ik}^\star) (\xx_i^t - \xx_k^t) - (w_{jk}^{t+1}-w_{jk}^\star) (\xx_j^t - \xx_k^t) \right) \right. \\
        &\qquad \qquad + \left.\frac{1}{2\rho c}\left(\nabla f_i(\xx_i^t) - \nabla f_j(\xx_j^t)\right)\right\rVert_2^2.
    \end{align*}
    Expand the right-hand side with Cauchy-Schwarz inequality
    \begin{align*}
        \norm*{\xx_i^{t+1} - \xx_j^{t+1}}_2^2 
        \le& 
         (1-2\eta\rho c)\norm*{\xx_i^{t} - \xx_j^{t}}_2^2 \\
         &+ 4\eta\rho c\norm*{ \frac{1}{2c} \sum_{k=1}^n \left((w_{ik}^t-w_{ik}^\star) (\xx_i^t - \xx_k^t) - (w_{jk}^t-w_{jk}^\star) (\xx_j^t - \xx_k^t) \right)}_2^2 \\
        &+ 4\eta\rho c\norm*{  \frac{1}{2\rho c}\left(\nabla f_i(\xx_i^t) - \nabla f_j(\xx_j^t)\right)}_2^2 \\
        \le& 
         (1-2\eta\rho c)\norm*{\xx_i^{t} - \xx_j^{t}}_2^2
         + 8\eta\rho c\norm*{ \frac{1}{2c} \sum_{k=1}^n (w_{ik}^t-w_{ik}^\star) (\xx_i^t - \xx_k^t)}_2^2 \\
        &+ 8\eta\rho c\norm*{ \frac{1}{2c} \sum_{k=1}^n (w_{jk}^t-w_{jk}^\star) (\xx_j^t - \xx_k^t) }_2^2
        + 4\eta\rho c\norm*{  \frac{1}{2\rho c}\left(\nabla f_i(\xx_i^t) - \nabla f_j(\xx_j^t)\right)}_2^2 \\
        \le& 
         (1-2\eta\rho c)\norm*{\xx_i^{t} - \xx_j^{t}}_2^2
         + \frac{2n\eta\rho}{c} \sum_{k=1}^n|w_{ik}^t-w_{ik}^\star|^2 \norm{\xx_i^t - \xx_k^t}_2^2\\
         &+ \frac{2n\eta\rho}{c} \sum_{k=1}^n|w_{jk}^t-w_{jk}^\star|^2 \norm{\xx_j^t - \xx_k^t}_2^2 
            + \frac{\eta}{\rho c} \underbrace{\norm{\nabla f_i(\xx_i^t) - \nabla f_j(\xx_j^t)}_2^2}_{=:\cT}.
    \end{align*}
    The last term $\cT$ can be upper bounded by adding $\pm \nabla f_i\left(\zz_{ij}^t\right)
        \pm \nabla f_j\left( \zz_{ij}^t \right)$ and use L-smoothness assumption \Cref{a:smoothness} of $f_i$ and that $i$, $j$ belong to the same cluster \Cref{a:collaborative}
    \begin{align*}
        \cT
        =&\norm*{\nabla f_i(\xx_i^t) \pm \nabla f_i\left(\zz_{ij}^t\right)
        \pm \nabla f_j\left( \zz_{ij}^t \right) - \nabla f_j(\xx_j^t)}_2^2 \\
        \le& 3 \norm*{\nabla f_i(\xx_i^t) - \nabla f_i\left(\zz_{ij}^t \right) }_2^2 + 3 \norm*{ \nabla f_i\left(\zz_{ij}^t \right) - \nabla f_j\left(\zz_{ij}^t\right)}_2^2 \\
        & +3 \norm*{\nabla f_j(\xx_j^t) - \nabla f_j\left(\zz_{ij}^t\right) }_2^2 \\
        \le& \frac{3L^2}{2} \norm{\xx_i^t - \xx_j^t}_2^2 + 3M^2_{ij} \norm*{ \nabla f_i\left(\zz_{ij}^t \right) + \nabla f_j\left(\zz_{ij}^t \right)}_2^2 \\
        =& \frac{3L^2}{2} \norm*{\xx_i^t - \xx_j^t}_2^2 + 3M^2_{ij} \norm*{\nabla \tilde{f}_{ij} (\zz_{ij}^t) }_2^2.
    \end{align*}
    By summing for all $i,j\in\cC$ 
    \begin{align*}
        \sum_{i,j\in\cC}  \norm*{\xx_i^{t+1} - \xx_j^{t+1}}_2^2 
        \le& (1-2\eta\rho c) \sum_{i,j\in\cC}\norm*{\xx_i^{t} - \xx_j^{t}}_2^2
        + 4n\eta\rho \sum_{i\in\cC} \sum_{k=1}^n|w_{ik}^t-w_{ik}^\star|^2 \norm{\xx_i^t - \xx_k^t}_2^2 \\
        &+ \frac{3\eta L^2}{2\rho c} \sum_{i,j\in\cC}\norm*{\xx_i^{t} - \xx_j^{t}}_2^2
        + \frac{3\eta M^2_{ij}}{\rho c} \sum_{i,j\in\cC} \norm*{\nabla \tilde{f}_{ij} (\zz_{ij}^t) }_2^2
    \end{align*}
    Use the previous \Cref{lemma:sd} to bound $\sum_{i,j\in\cC} \norm*{\nabla \tilde{f}_{ij} (\zz_{ij}^t) }_2^2$
    \begin{align*}
        \sum_{i,j\in\cC}  \norm*{\xx_i^{t+1} - \xx_j^{t+1}}_2^2 
        \le& (1-2\eta\rho c) \sum_{i,j\in\cC}\norm*{\xx_i^{t} - \xx_j^{t}}_2^2
        + 4n\eta\rho \sum_{i\in\cC} \sum_{k=1}^n|w_{ik}^t-w_{ik}^\star|^2 \norm{\xx_i^t - \xx_k^t}_2^2 \\
        &+ \frac{3\eta L^2}{2\rho c} \sum_{i,j\in\cC}\norm*{\xx_i^{t} - \xx_j^{t}}_2^2 \\
        &+ \frac{3\eta M^2_{ij}}{\rho c} \left(
        \frac{2}{\eta} \sum_{i,j\in\cC}\left( \tilde{f}_{ij}\left(\zz_{ij}^{t}\right) - \tilde{f}_{ij}\left(\zz_{ij}^{t+1}\right)  \right) \right. \\
        & \left. \qquad+  \left(\frac{3L^2}{4} + 3c^2\rho^2 + \frac{ L\eta\rho^2 (c-2) 2 \sigma^2}{b} \right) \sum_{i,j\in\cC} \norm*{\xx_i^t-\xx_j^t}_2^2
        \right) \\
        &+\frac{3\eta M^2_{ij}}{\rho c} \left(3nc\rho^2 \sum_{i\in\cC} \sum_{k=1}^n |w_{ik}^{t+1} - w_{ik}^\star|^2 \norm*{\xx_i^t - \xx_k^t}_2^2
        + \frac{c^2 L\eta \sigma^2}{2}
        \right).
    \end{align*}
    Rearrange the terms
    \begin{align*}
        &\sum_{i,j\in\cC}  \norm*{\xx_i^{t+1} - \xx_j^{t+1}}_2^2 \\
        \le& \left(1-2\eta\rho c+ \frac{3\eta L^2}{2\rho c}+ \frac{3\eta M^2_{ij}}{\rho c} \left(\frac{3L^2}{4} + 3c^2\rho^2 + \frac{ L\eta\rho^2 (c-2) 2 \sigma^2}{b} \right)\right) \sum_{i,j\in\cC}\norm*{\xx_i^{t} - \xx_j^{t}}_2^2 \\
        &+ \left(4n\eta\rho+\frac{3\eta M^2_{ij}}{\rho c}3nc\rho^2\right) \sum_{i\in\cC} \sum_{k=1}^n|w_{ik}^t-w_{ik}^\star|^2 \norm{\xx_i^t - \xx_k^t}_2^2 \\
        &+ \frac{3\eta M^2_{ij}}{\rho c} 
        \frac{2}{\eta} \sum_{i,j\in\cC}\left( \tilde{f}_{ij}\left(\zz_{ij}^{t}\right) - \tilde{f}_{ij}\left(\zz_{ij}^{t+1}\right)  \right)
        +  \frac{3\eta M^2_{ij}}{\rho c}  \frac{c^2 L\eta \sigma^2}{2}.
    \end{align*}
    By taking $b\ge \frac{2}{c^2}2L\eta(c-2)\sigma^2$ and $\rho\ge \frac{\sqrt{3}L}{c}$ and $M_{ij}\le\frac{1}{5}$, the following inequality hold true
    \begin{align*}
         \frac{3\eta M^2_{ij}}{\rho c} \left(\frac{3L^2}{4} + 3c^2\rho^2 + \frac{ L\eta\rho^2 (c-2) 2 \sigma^2}{b} \right)
         \le \frac{3\eta M^2_{ij}}{\rho c} \frac{15}{4} \rho^2 c^2
         \le \frac{45}{4}\rho c \eta M^2_{ij}
         \le \frac{1}{2} \eta\rho c.
    \end{align*}
    The upper bound of $\sum_{i,j\in\cC}  \norm*{\xx_i^{t+1} - \xx_j^{t+1}}_2^2$ can be simplied
    \begin{align*}
        \sum_{i,j\in\cC}  \norm*{\xx_i^{t+1} - \xx_j^{t+1}}_2^2 
        \le& \left(1-\eta\rho c\right) \sum_{i,j\in\cC}\norm*{\xx_i^{t} - \xx_j^{t}}_2^2 \\
        &+5n\eta\rho\sum_{i\in\cC} \sum_{k=1}^n|w_{ik}^t-w_{ik}^\star|^2 \norm{\xx_i^t - \xx_k^t}_2^2 \\
        &+ \frac{6 M^2_{ij}}{\rho c} 
        \sum_{i,j\in\cC}\left( \tilde{f}_{ij}\left(\zz_{ij}^{t}\right) - \tilde{f}_{ij}\left(\zz_{ij}^{t+1}\right)  \right) +  \frac{3\eta M^2_{ij}}{\rho c}  \frac{c^2 L\eta \sigma^2}{2}.
    \end{align*}

\end{proof}

\subsection{Proof of \Cref{theorem:main}}
\begin{proof}
    Given \Cref{lemma:consensus_distance} and average over time $t=0$ over $T-1$ and take expectation to all randomness throughout training
    \begin{align*}
        \frac{1}{T}\sum_{t=0}^{T-1}\sum_{i,j\in\cC}  \E\left[\norm*{\xx_i^{t+1} - \xx_j^{t+1}}_2^2 \right]
        \le& \left(1-\eta\rho c\right) \frac{1}{T}\sum_{t=0}^{T-1}\sum_{i,j\in\cC}\E\left[\norm*{\xx_i^{t} - \xx_j^{t}}_2^2 \right] \\
        &+5n\eta\rho \frac{1}{T}\sum_{t=0}^{T-1} \sum_{i\in\cC} \sum_{k=1}^n \E\left[|w_{ik}^t-w_{ik}^\star|^2 \norm{\xx_i^t - \xx_k^t}_2^2\right] \\
        &+ \frac{6 M^2_{ij}}{\rho c} \frac{1}{T}\sum_{t=0}^{T-1}
        \sum_{i,j\in\cC} \E\left[\left( \tilde{f}_{ij}\left(\zz_{ij}^{t}\right) \right]- \tilde{f}_{ij}\left(\zz_{ij}^{t+1}\right)  \right)
        +  \frac{3\eta M^2_{ij}}{\rho c}  \frac{c^2 L\eta \sigma^2}{2}.
    \end{align*}
    Rearrange $\frac{1}{T}\sum_{t=0}^{T-1}\sum_{i,j\in\cC}  \E\left[\norm*{\xx_i^{t+1} - \xx_j^{t+1}}_2^2\right] $ yields
    \begin{align*}
        \frac{1}{T}\sum_{t=0}^{T-1}\sum_{i,j\in\cC}  \E\left[\norm*{\xx_i^{t+1} - \xx_j^{t+1}}_2^2 \right]
        \le&\frac{5n}{c} \frac{1}{T}\sum_{t=0}^{T-1} \sum_{i\in\cC} \sum_{k=1}^n \E\left[|w_{ik}^t-w_{ik}^\star|^2 \norm{\xx_i^t - \xx_k^t}_2^2\right] \\
        &+ \frac{6 M^2_{ij}}{\eta\rho^2 c^2} \frac{1}{T}\sum_{t=0}^{T-1}
        \sum_{i,j\in\cC}\E\left[\left( \tilde{f}_{ij}\left(\zz_{ij}^{t}\right) - \tilde{f}_{ij}\left(\zz_{ij}^{t+1}\right)  \right)\right]
        + \frac{3 M^2_{ij}}{\rho^2 }  \frac{ L\eta \sigma^2}{2}.
    \end{align*}
    Consider bounding $|w_{ik}^t-w_{ik}^\star|^2$ in two cases

    \textbf{Case 1: $w_{ik}^\star=1$. } Suppose $M_{ik}\in(0,1)$, then $\norm{\nabla f_i(z^t_{ik}) - \nabla f_k(z^t_{ik})}_2^2 \le M^2_{ij} \norm{\nabla f_i(z^t_{ik}) + \nabla f_j(z^t_{ik})}_2^2$ implies 
    \begin{align*}
        \langle \nabla f_i(z^t_{ik}), \nabla f_k(z^t_{ik}) \rangle \ge \frac{1-M^2_{ik}}{2(1+M^2_{ik})} \left( \norm{\nabla f_i(z^t_{ik})}_2^2
        + \norm{\nabla f_k(z^t_{ik})}_2^2 \right) 
        \ge 0.
    \end{align*}
    then $w_{ik}^{t+1}= w_{ik}^\star = 1$ and therefore $|w_{ik}^{t+1} - w_{ik}^\star|^2=0$.

    \textbf{Case 2: $w_{ik}^\star=0$. } Suppose $\zeta^2_{ik}\ge \norm{ \nabla f_i(\xx) + \nabla f_k(\xx) }_2^2$ for all $\xx$ then
    \begin{align*}
        \norm{ \nabla f_i(\zz^t_{ik}) + \nabla f_k(\zz^t_{ik}) }_2^2
        =& \norm{ \nabla f_i(\zz^t_{ik})  }_2^2 + \norm{ \nabla f_k(\zz^t_{ik}) }_2^2
        + 2\langle \nabla f_i(\zz^t_{ik}), \nabla f_k(\zz^t_{ik}) \rangle \\
        \ge& \zeta^2_{ik} + 2\langle \nabla f_i(\zz^t_{ik}), \nabla f_k(\zz^t_{ik}) \rangle
    \end{align*}
    which means the inner product $\langle \nabla f_i(z^t_{ij}), \nabla f_j(z^t_{ij}) \rangle \le 0$ is negative, i.e., $w_{ij}^{t+1}=0=w_{ij}^\star$.

    Then with lower bound assumption of $f_i$ and $f_j$ \Cref{a:global_minimum}
    \begin{align*}
        \frac{1}{T}\sum_{t=0}^{T-1}\sum_{i,j\in\cC}  \E\left[\norm*{\xx_i^{t+1} - \xx_j^{t+1}}_2^2 \right]
        \le& \frac{6 M^2_{ij}}{\eta\rho^2 c^2} \frac{1}{T}\sum_{t=0}^{T-1}
        \sum_{i,j\in\cC} \E\left[\left( \tilde{f}_{ij}\left(\zz_{ij}^{t}\right) - \tilde{f}_{ij}\left(\zz_{ij}^{t+1}\right)  \right) \right]
        + \frac{3 M^2_{ij}}{\rho^2 }  \frac{ L\eta \sigma^2}{2}\\
        \le& \frac{6M^2_{ij}}{\eta\rho^2 c^2 T} 
        \sum_{i,j\in\cC}  \left( \tilde{f}_{ij}\left(\zz_{ij}^{0}\right) - \tilde{f}_{ij}^\star  \right)
        + \frac{3 M^2_{ij}}{\rho^2 }  \frac{ L\eta \sigma^2}{2}.
    \end{align*}
    Minimize the upper bound through choosing $\eta$ 
    \begin{align*}
        \eta\le \frac{2}{\sigma\sqrt{LT}} \sqrt{ \frac{1}{c^2} \sum_{i,j\in\cC}  \left( \tilde{f}_{ij}\left(\zz_{ij}^{0}\right) - \tilde{f}_{ij}^\star  \right)}
    \end{align*}
    such that
    \begin{align}\label{eq:xi_xj}
        \frac{1}{T}\sum_{t=0}^{T-1}\sum_{i,j\in\cC}  \E\left[\norm*{\xx_i^{t+1} - \xx_j^{t+1}}_2^2 \right]
        \le& \frac{6M^2_{ij}}{\rho^2 } 
         \sqrt{ \frac{L\sigma^2}{c^2T} \sum_{i,j\in\cC} \left( \tilde{f}_{ij}\left(\zz_{ij}^{0}\right) - \tilde{f}_{ij}^\star  \right)}.
    \end{align}
    By the result of \Cref{lemma:sd}
    \begin{align*}
       \frac{1}{T}\sum_{t=0}^{T-1} \sum_{i,j\in\cC} \E\left[\norm*{\nabla \tilde{f}_{ij}\left(\zz_{ij}^{t}\right)  }_2^2\right]
        \le& \frac{2}{\eta} \frac{1}{T} \sum_{i,j\in\cC}\left( \tilde{f}_{ij}\left(\zz_{ij}^{0}\right) - \tilde{f}_{ij}^\star  \right)
        + 4c^2\rho^2  \frac{1}{T}\sum_{t=0}^{T-1} \sum_{i,j\in\cC} \E\left[\norm*{\xx_i^t-\xx_j^t}_2^2 \right]
        + \frac{c^2 L\eta \sigma^2}{2}\\
        \le& 2c^2\sqrt{ \frac{L\sigma^2}{c^2T} \sum_{i,j\in\cC} \left( \tilde{f}_{ij}\left(\zz_{ij}^{0}\right) - \tilde{f}_{ij}^\star  \right)}
        + 4c^2\rho^2  \frac{1}{T}\sum_{t=0}^{T-1} \sum_{i,j\in\cC} \E\left[\norm*{\xx_i^t-\xx_j^t}_2^2\right] \\
        \le& \left(2c^2 + 24c^2M^2_{ij}\right) \sqrt{ \frac{L\sigma^2}{c^2T} \sum_{i,j\in\cC} \left( \tilde{f}_{ij}\left(\zz_{ij}^{0}\right) - \tilde{f}_{ij}^\star  \right)} \\
        \le& 3c^2 \sqrt{ \frac{L\sigma^2}{c^2T} \sum_{i,j\in\cC} \left( \tilde{f}_{ij}\left(\zz_{ij}^{0}\right) - \tilde{f}_{ij}^\star  \right)}.
    \end{align*}

\end{proof}

\subsection{Proof of \Cref{eq:corollary}}
\begin{proof}
    By adding $\norm*{\nabla f_i(x) + \nabla f_j(x) }_2^2$ on both sides of \eqref{eq:in_cluster}, and replace $x$ with $\zz_{ij}$ we have
    \begin{equation}
        2\left(\norm*{\nabla f_i(\zz_{ij})}_2^2 + \norm*{\nabla f_j(\zz_{ij})}_2^2\right) \le 4(1+M_{ij}^2)\norm*{\nabla \tilde{f}_{ij}(\zz_{ij})}_2^2 \qquad \forall~\xx\in\R^d.
    \end{equation}
    Then using the upper bound of $M_{ij} < 1/5$ from \Cref{theorem:main},  and average over $t$ and $i,j$ yields
    \begin{align}\label{eq:nabla_fi_z_ij}
       \frac{1}{c^2T}\sum_{t=0}^{T-1} \sum_{i,j\in\cC} \E\left[\norm*{\nabla f_i\left(\zz_{ij}^{t}\right)}_2^2\right]
        \le& \left(1+\frac{1}{25}\right)3 \sqrt{ \frac{L\sigma^2}{c^2T} \sum_{i,j\in\cC} \left( \tilde{f}_{ij}\left(\zz_{ij}^{0}\right) - \tilde{f}_{ij}^\star  \right)}.
    \end{align}
    By Cauchy-Schwarz inequality and $L$-Lipschitz smoothness, we have that
    \begin{align*}
         \frac{1}{c^2T}\sum_{t=0}^{T-1} \sum_{i,j\in\cC} \norm*{ \nabla f_i\left(\xx_{i}^{t}\right)  }_2^2 
        \le & \frac{1}{c^2T}\sum_{t=0}^{T-1} \sum_{i,j\in\cC} \norm*{\nabla f_i\left(\zz_{ij}^{t}\right)  }_2^2 
        +  \frac{1}{c^2T}\sum_{t=0}^{T-1} \sum_{i,j\in\cC} \norm*{\nabla f_i\left(\zz_{ij}^{t}\right) - \nabla f_i\left(\xx_{i}^{t}\right)  }_2^2 \\
        \le & \frac{1}{c^2T}\sum_{t=0}^{T-1} \sum_{i,j\in\cC} \norm*{\nabla f_i\left(\zz_{ij}^{t}\right)  }_2^2 
        + \frac{L^2}{4} \frac{1}{c^2T}\sum_{t=0}^{T-1} \sum_{i,j\in\cC} \norm*{ \xx_i^t - \xx_j^t }_2^2.
    \end{align*}
    Applying \eqref{eq:xi_xj} and  \eqref{eq:nabla_fi_z_ij} to the upper bound of the above inequality
    \begin{align*}
         \frac{1}{c^2T}\sum_{t=0}^{T-1} \sum_{i,j\in\cC} \norm*{ \nabla f_i\left(\xx_{i}^{t}\right)  }_2^2 
        \le & \left(1+\frac{1}{25}\right)3 \sqrt{ \frac{L\sigma^2}{c^2T} \sum_{i,j\in\cC} \left( \tilde{f}_{ij}\left(\zz_{ij}^{0}\right) - \tilde{f}_{ij}^\star  \right)} \\
        & 
        + \frac{L^2}{4} \frac{6M^2_{ij}}{\rho^2c^2} \sqrt{ \frac{L\sigma^2}{c^2T} \sum_{i,j\in\cC} \left( \tilde{f}_{ij}\left(\zz_{ij}^{0}\right) - \tilde{f}_{ij}^\star  \right)}.
    \end{align*}
    As $\rho c \ge \sqrt{3}L$ and $M_{ij} < \frac{1}{5}$ as stated in \Cref{theorem:main}, 
    \begin{align*}
         \frac{1}{c^2T}\sum_{t=0}^{T-1} \sum_{i,j\in\cC} \norm*{ \nabla f_i\left(\xx_{i}^{t}\right)  }_2^2 
        \le & 4 \sqrt{ \frac{L\sigma^2}{c^2T} \sum_{i,j\in\cC} \left( \tilde{f}_{ij}\left(\zz_{ij}^{0}\right) - \tilde{f}_{ij}^\star  \right)}.
    \end{align*}
\end{proof}    

\section{Experimental Details}
\label{sec:exp-details}
In Section \ref{sec:exps}, we present our results on two tasks with different properties. Here, we provide the full details of our experimental setup, alongside with additional experiments. 

We first describe the setup for cross-device and cross-silo experiments: we use the fix batch size of 128 for cross-device, and cross-silo experiments on CIFAR-100. We tune each method for the optimal learning rate individually: we use learning rate of 0.1 for ditto, 0.05 for Federated Clustering (FC), and 0.01 for all other methods. For Ditto, we use the hyper-parameter of $\lambda = 1$ as recommended in their paper. For Federated Clustering, we use the ground truth number of clusters and size of clusters as the hyper-parameter. We also use the ground truth number of clusters for IFCA, and sample all the clients in cross-silo experiment. We reduce the sampling rate to 10\% for the cross-device experiment to ensure scalability and fairness for comparison to other methods. For cross-silo experiments we employed a single NVIDIA V-100 GPU with 32GB memory, and moved to four NVIDIA V-100 GPUs with 32 GB memory for cross-device experiment. With this setup, running \ouralgo for cross-silo and cross-device experiment takes 9 hours and 28 hours respectively.

For Language modelling experiment, we conducted the experiments with the learning rate of 0.002, batch size of 50, and 4 accumulation steps. Note that each agent only get a subset of the regarding language from Wiki-40B dataset, consisting of total of 800000 tokens. We also used the context length of 512, dropout rate of 0.1, and LoRA module with rank 4. Training is performed on a single NVIDIA A-100 GPU with 40GB memory. It takes 2.5 hours to run \ouralgo for 500 iterations in this framework. We also use $\lambda = 0.1$ which has higher performance for this experiment.

\begin{figure}[ht!]
    \centering
    \includegraphics[width=0.9\textwidth]{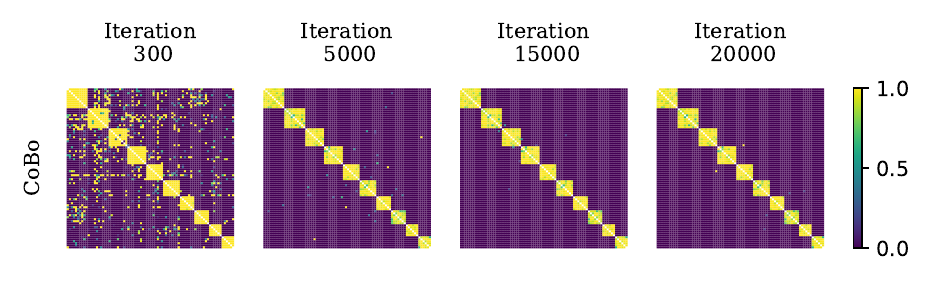} 
    \caption{
    Collaboration matrices learned by \ouralgo at different stages of training for cross-device experiment with 80 clients. The diagonals are masked out. The oracle matrix is a block diagonal matrix, consisting of 10 blocks: two blocks of size 10, two blocks of size 9, and so on. The collaboration matrix of \ouralgo already starts to look similar to oracle matrix within as low as 300 iterations. (1.5\% of the total iterations)}
    \label{fig:heatmaps80}
\end{figure}

\begin{figure}[ht!]
    \centering
    \includegraphics[width=0.9\textwidth]{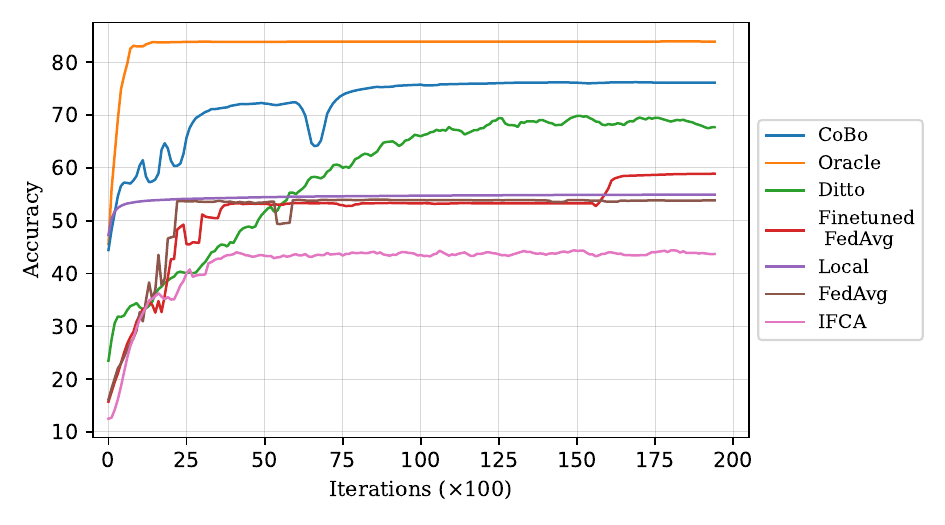} 
    \caption{Averaged accuracies of personalized models for cross-device federated learning with 80 clients.
    The "Oracle" denotes applying FedAvg to the clients having the data from the same classes of CIFAR-100 dataset.}
    \label{fig:accuracy-80}
\end{figure}

For the cross-device experiment with 80 agents in Section \ref{subsec:lang-exp}, we present the accuracy curve in Figure \ref{fig:accuracy-80}. Our method outperform all other methods except the Oracle with a large margin. We can also observe the collaboration matrix of \ouralgo in Figure \ref{fig:heatmaps80}. The clusters are learned with \ouralgo efficiently. 


\end{document}